\documentclass[letterpaper]{article} %

\usepackage[preprint]{aaai2027}

\usepackage[hyphens]{url}
\usepackage{graphicx}
\usepackage{soul} %
\graphicspath{{./}}
\usepackage{natbib}
\usepackage{booktabs}
\usepackage{multirow}             %
\usepackage[table]{xcolor}        %
\usepackage{rotating}              %
\definecolor{tabband}{gray}{0.90} %
\usepackage{amsmath}
\usepackage{amssymb}
\usepackage{amsthm}                %
\usepackage{xspace}
\usepackage{tikz}                  %
\usetikzlibrary{positioning,calc,shapes.geometric,decorations.pathreplacing,arrows.meta}

\newtheorem*{theorem*}{Theorem}

\newtheorem*{lemma*}{Lemma}
\newtheorem{proposition}{Proposition}
\newtheorem*{proposition*}{Proposition}
\theoremstyle{definition}

\newcommand{\cgr}{\textsc{CGR}\xspace}
\newcommand{\Lconf}{\mathcal{L}_{\mathrm{conf}}}      %
\newcommand{\Lanchor}{\mathcal{L}_{\mathrm{anc}}}     %

\newcommand{\Sret}{$\mathcal{S}_{\mathrm{ret}}$}
\newcommand{\Stsk}{$\mathcal{S}_{\mathrm{tsk}}$}

\usepackage{tikz}

\title{Confine to the Task, Anchor the Rest: Concept-Guided Retention Regularizer for Parameter-Efficient Fine-Tuning}

\author{
    Ibna Kowsar\textsuperscript{\rm 1},
    Ivan Medri\textsuperscript{\rm 2},
    Fanyu Meng\textsuperscript{\rm 1},
    Xin Liu\textsuperscript{\rm 1}
}
\affiliations{
    \textsuperscript{\rm 1}Department of Computer Science, University of California, Davis\\
    \textsuperscript{\rm 2}University of Virginia\\
}

\begin{document}

\maketitle

\begin{abstract}
Parameter-efficient fine-tuning (PEFT) has been widely used, yet the adapted model suffers catastrophic forgetting of pre-trained knowledge. While most existing
work hardens the adapter by
constraining \emph{where} the weight update may lie, we note that constraining this weight channel does not constrain the activation channel.  In this paper, we propose Concept-Guided Retention (CGR), a plug-and-play regularizer that constrains both for any PEFT base. \cgr{} reads a frozen sparse autoencoder (SAE) and treats two disjoint concept sets differently. It softly confines the update to the task-concept subspace (via weight channel), and anchors a selected set of general-purpose concepts to the frozen model (via activation channel). We also provide anchor value bounds based on weight updates to understand how they interact, and show that both terms are necessary. 
On a CLIP vision panel, CGR significantly improves retention at comparable accuracy for every base, and the same pattern holds for world-knowledge retention under Llama-3.1-8B instruction tuning. Because the protected directions are \emph{named} SAE concepts, \cgr{} remains inspectable and controllable.\\[3pt]
\end{abstract}

\section{Introduction}

Parameter-efficient fine-tuning (PEFT) adapts frozen foundation models at a fraction of full fine-tuning's cost \citep{chen2022adaptformer}. Low-Rank Adaptation (LoRA)\citep{hu2022lora}, the de facto standard, parameterizes weight updates as $\Delta W=BA$. LoRA optimizes downstream loss without penalizing drift from $W_0$. The adapter is therefore free to write into directions that carry
general-purpose capability, and fine-tuned adapters measurably forget
pre-trained knowledge~\citep{biderman2024lora,shuttleworth2025lora}. The question is how an adapter can learn a new task while preserving pre-trained knowledge.

\textbf{The dominant answer is weight-geometric}, which is
one-sided. Recent work shows that initializing or confining the adapter within a
chosen \emph{weight} subspace mitigates catastrophic forgetting to a degree~\citep{oplora,loranull,corda,sclora,sails}. These methods differ in
\emph{how} they choose the subspace, but most of them constrain only \emph{where the
weight update lies}.
A subspace constraint only bounds the component of the weight delta outside
the chosen subspace and leaves the
drift of activation space unconstrained~\citep{cwac}. Thus, 
weight-subspace constraints alone are insufficient for
activation-space retention. Instead, we need to constrain both: a weight-subspace (where the update lies) and an
activation space (how retained features respond). 
To the best of our knowledge, no existing PEFT method constrains both, a key gap that motivates our work.

To address weight- and activation-space retention together, we ask
\emph{what} to constrain and \emph{how} to constrain it. We answer this with Sparse autoencoders (SAEs). SAEs have emerged as a tool for {editing} and {steering} concepts in a frozen model~\citep{marks2024sfc}. A key innovation is how we leverage a frozen SAE from a fixed layer to derive two disjoint concept sets: a task set and a concept-retention set. Using these two sets, our proposed  \textbf{\underbar{C}}oncept-\textbf{\underbar{G}}uided \textbf{\underbar{R}}etention (CGR), develops a plug-and-play regularizer with two terms: A \textsc{confine} term softly holds the
adapter weight-delta in the SAE task-concept subspace (the weight channel). An
\textsc{anchor} term holds the fine-tuned model's selected retention concepts (general-purpose features) to their pre-fine-tuning values (the
activation channel). 
 The construction is modality-agnostic, interpretable, and we validate it on vision and language tasks.

\paragraph{Contributions.}
\begin{itemize}
\item \textbf{Role-split retention.} 
From a frozen SAE we select two small disjoint concept sets: a task set, to which the weight update is confined, and a retention set, which is anchored to the frozen model. Anchoring a small set of the most frequently active off-task concepts beats blanketing every feature.

\item \textbf{A plug-and-play method.} We introduce two regularizers
that can be added to any PEFT adapter. Since adaptation is guided by named SAE concept basis, features remain interpretable after fine-tuning.

\item 
\textbf{Bounds on retention-concept drift.} %
We propose the \textsc{anchor} term as the expected drift of the retention concepts and derive bounds based on weight-space quantities (Prop.~\ref{prop:drift-bounds}). This derivation places weight projections and concept control in a common framework, clarifying how learned weight updates translate into movement of selected concepts and vice-versa.

\item \textbf{Evaluation.} We evaluate \cgr{} on ten vision tasks with CLIP ViT-B/32 and on instruction tuning with Llama-3.1-8B, each with six existing adapters. \cgr{} improves every adapter's
knowledge retention while maintaining comparable accuracy. Average vision retention rises by 2–7\% (LoRA: $74.1{\to}80.2$), and average LLM world-knowledge retention rises by 2–8\% (LoRA TriviaQA: $27.6{\to}43.5$).

\end{itemize}

\section{Related work}\label{sec:related}

\begin{figure*}[t]
  \centering
  \providecommand{\Lconf}{\mathcal{L}_{\mathrm{conf}}}
\providecommand{\Lanchor}{\mathcal{L}_{\mathrm{anc}}}
\definecolor{cgrTask}{HTML}{1565C0}   %
\definecolor{cgrRet}{HTML}{C2410C}    %
\definecolor{cgrTrain}{HTML}{6A1B9A}  %
\begin{tikzpicture}[
  font=\small,
  >={Stealth[length=4pt]},
  box/.style    ={draw=black!55, fill=white, rounded corners=1.5pt,
                  inner sep=2.5pt, align=center},
  frozenbox/.style={box, fill=black!8},
  trainbox/.style ={box, fill=cgrTrain!32, draw=cgrTrain!90!black},
  chipT/.style  ={box, draw=cgrTask!95!black, fill=cgrTask!22, minimum width=3.85cm},
  chipR/.style  ={box, draw=cgrRet!95!black,  fill=cgrRet!24, minimum width=3.85cm},
  panel/.style  ={draw=black!25, fill=black!3, rounded corners=4pt},
  ptitle/.style ={anchor=north west, font=\small\bfseries, inner sep=2pt},
  note/.style   ={font=\small, text=black!65, align=center},
  arr/.style    ={->, black!80, line width=0.6pt},
  darr/.style   ={arr, dashed},
  flow/.style   ={-{Stealth[length=6pt, width=6pt]}, black!65, line width=2.2pt},
]
\draw[panel] (-2.60,0) rectangle (5.85,4.8);
\node[ptitle] at (-2.50,4.7) {(1)~Select concepts};
\node[box, inner sep=2pt, align=center] (dat) at (-1.55,3.35) {Task data\\[-1pt]$x\!\sim\!\mathcal{D}_{\mathrm{task}}$};
\node[frozenbox, inner sep=2pt] (enc) at (1.45,3.35) {Frozen backbone\\[-1pt]$h_0(x)$};
\node[frozenbox, inner sep=2pt] (sae) at (4.45,3.35) {SAE enc\\[-1pt]$z\in\mathbb{R}^m$};
\draw[arr] (dat) -- (enc);
\draw[arr] (enc) -- (sae);
\draw[arr] (sae.south) |- (3.50,2.36);
\node[note] at (1.41,2.75) {select task \& retention concept activations};
\foreach [count=\i] \c in {cgrTask,cgrTask,cgrTask,black!20,cgrRet,black!20,cgrRet,cgrRet,black!20,black!20,black!20,black!20}
  \draw[draw=black!40, line width=0.35pt, fill=\c]
    ({-0.60+(\i-1)*0.34},2.22) rectangle ++(0.28,0.28);
\draw[decorate, decoration={brace, mirror, amplitude=3pt}, cgrTask]
  (-0.60,2.15) -- (0.36,2.15);
\node[font=\small, text=cgrTask!80!black, anchor=north] at (-0.12,2.07)
  {$\mathcal{S}_{\mathrm{tsk}}$ task};
\draw[decorate, decoration={brace, mirror, amplitude=3pt}, cgrRet]
  (0.76,2.15) -- (2.06,2.15);
\node[font=\small, text=cgrRet!80!black, anchor=north west] at (0.80,2.07)
  {$\mathcal{S}_{\mathrm{ret}}$ general-purpose retention};
\node[chipT, inner sep=2.5pt] (us)  at (-0.58,0.78)
  {$U_{\mathrm{tsk}}=\operatorname{orth}(W_{dec}[:,\mathcal{S}_{\mathrm{tsk}}])$};
\node[chipR, inner sep=2.5pt] (ret) at (3.6,0.78)
  {$f_{\mathcal{S}_{\mathrm{ret}}}$ frozen ReLU read-out};
\draw[arr, cgrTask!85!black] (-0.12,1.6) -- (-0.12,1.17);
\draw[arr, cgrRet!85!black] (3.00,1.6) -- (3.00,1.17);
\draw[panel] (6.85,0) rectangle (14.40,4.8);
\node[ptitle] at (6.93,4.7) {(2)~Fine-tune with two retention channels};
\node[frozenbox, minimum width=0.80cm, minimum height=1.30cm] (w0) at (7.80,2.15)
  {$W_0$\\[-2pt]frozen};
\node[trainbox, trapezium, trapezium angle=68, minimum width=0.85cm, inner sep=2pt]
  (la) at (8.95,1.62) {$A$};
\node[trainbox, trapezium, trapezium angle=68, shape border rotate=180,
      minimum width=0.85cm, inner sep=2pt] (lb) at (8.95,2.68) {$B$};
\node (xin) at (8.375,0.55) {$x$};
\node[circle, draw=black!60, inner sep=0.6pt] (plus) at (8.375,3.50) {$+$};
\draw[arr] (xin.north) to[out=90,in=-90] (w0.south);
\draw[arr] (xin.north) to[out=90,in=-90] (la.south);
\draw[arr] (la.north) -- (lb.south);
\draw[arr] (lb.north) to[out=90,in=-40] (plus);
\draw[arr] (w0.north) to[out=90,in=-140] node[pos=0.45, left=0pt, font=\small] {$h_0$} (plus);
\node[chipT, align=center, line width=1.1pt, fill=cgrTask!26,
      minimum width=4.15cm] (conf) at (12.10,1.70)
  {{\color{cgrTask}\textsc{confine}} (weight channel)\\[1pt]
   $\Lconf=\lVert(I-{\color{cgrTask}U_{\mathrm{tsk}}U_{\mathrm{tsk}}^{\top}})BA\rVert_F^2$};
\node[fill=cgrTask, text=white, font=\small\bfseries, rounded corners=1pt,
      inner sep=1.6pt] at (conf.north west) {CGR};
\draw[darr, cgrTask] ($(la.east)!0.5!(lb.east)$) to[out=0,in=180] ([yshift=-6pt]conf.west);
\node[chipR, align=center, line width=1.1pt, fill=cgrRet!26,
      minimum width=4.15cm] (anch) at (12.10,3.25)
  {{\color{cgrRet}\textsc{anchor}} (activation channel)\\[1pt]
   $\Lanchor= \mathbb{E}_x\|\Delta f_{\mathrm{ret}}(x)\|_2^2$};
\node[fill=cgrRet, text=white, font=\small\bfseries, rounded corners=1pt,
      inner sep=1.6pt] at (anch.north west) {CGR};
\draw[arr, cgrRet] (plus.east)
  to[out=0,in=180] node[pos=0.30, above=0pt, font=\small, text=black!70] {$h_{\mathrm{ft}}$} (anch.west);
\node[box, inner sep=2.5pt] at (11.80,0.50)
  {$\mathcal{L}=\mathcal{L}_{\mathrm{task}}
    +\lambda_{\mathrm{conf}}{\color{cgrTask}\Lconf}
    +\lambda_{\mathrm{anc}}{\color{cgrRet}\Lanchor}$};
\draw[flow] (5.90,2.20) -- (6.80,2.20);
\node[font=\small, text=cgrTask!80!black] at (6.35,2.68) {$U_{\mathrm{tsk}}$};
\node[font=\small, text=cgrRet!80!black]  at (6.35,1.72) {$\mathcal{S}_{\mathrm{ret}}$};
\end{tikzpicture}
  \caption{{\cgr{} confines the weight update to task concepts
and anchors retained concepts to the frozen model, both read from one frozen
SAE.} {Left:} one pass of task data through the frozen backbone and SAE
selects the task set $\mathcal{S}_{\mathrm{tsk}}$ by causal attribution and a
disjoint retention set $\mathcal{S}_{\mathrm{ret}}$ by firing presence.
{Right:} two \cgr{} regularizers attach to any PEFT base.
{Confine} penalizes the part of $\Delta W$ outside the task subspace
and {anchor} holds the retention-concept read-out at its frozen-model
values.
}
  \label{fig:pipeline}
\end{figure*}

We group prior work by the space in which its constraint acts (weight space, activation space) and how SAEs are used. Each addresses catastrophic forgetting only
in part, and Table~\ref{tab:method_matrix} summarizes which mechanisms each
method covers. Concurrent
preprints are distinguished in
App.~\ref{app:concurrent}.

\begin{table}[t]\centering\small\tabcolsep=3pt

\begin{tabular}{l ccccccc}\toprule
Method     & PEFT & Weight & Activ. & Pers. & Split & Bound & V+L \\\midrule
OPLoRA     & \checkmark & {hard} & $\times$ & \checkmark & $\times$ & $\times$ & $\times$ \\
\rowcolor{tabband}
LoRA-Null  & \checkmark & {hard} & $\times$ & $\times$   & $\times$ & $\times$ & $\times$ \\
CorDA      & \checkmark & {hard} & $\times$ & $\times$   & $\times$ & $\times$ & $\times$ \\
\rowcolor{tabband}
SC-LoRA    & \checkmark & {hard} & $\times$ & $\times$   & $\times$ & $\times$ & $\times$ \\
SAILS      & \checkmark & {hard} & $\times$ & $\times$   & $\times$ & $\times$ & $\times$ \\
\rowcolor{tabband}
CWAC       & $\times$   & {hard} & {soft} & \checkmark & $\times$ & $\times$ & $\times$ \\
SAE-FD     & \checkmark & $\times$   & {soft} & \checkmark & $\times$ & $\times$ & $\times$ \\
\rowcolor{tabband}
\cgr{} (ours) & \checkmark & {soft} & {soft} & \checkmark & \checkmark & \checkmark & \checkmark \\
\bottomrule\end{tabular}

\vspace{2pt}
\caption{{Mechanism coverage} (as reported by each paper).
{Weight}\,/\,{Activ.}: form of the
weight-\,/\,activation-space retention term, hard (exact projection or
initialization) vs.\ soft (penalty); $\times$: channel unconstrained;
{Role split}: disjoint task and retention concept sets with opposite
treatments; {Bound}: a stated bound on the retained capability's
drop.}
\label{tab:method_matrix}
\end{table}

\textbf{Weight-space methods} confine the update $\Delta W$ to a chosen low-rank
subspace of each frozen weight. {OPLoRA}~\citep{oplora} projects out of the
top-$k$ singular subspace of $W_0$. {LoRA-Null}~\citep{loranull}
initializes in the null space of calibration activations.
{CorDA}~\citep{corda} uses an input-covariance--weighted SVD.
{SC-LoRA}~\citep{sclora} contrasts fine-tuning against a preserved
covariance, a formulation closely related to CorDA, and in this paper we use CorDA as a baseline for both. {SAILS}~\citep{sails} initializes the LoRA $B$ factor from
SAE decoder directions. Consistent with this picture, how much LoRA forgets is predicted by the
principal angles between the old- and new-task gradient subspaces~\citep{subgeom2026}.
As Table~\ref{tab:method_matrix} shows, every weight-channel constraint in
this cluster is a hard projection.  Some of these projections are continuously imposed whereas others are only done during initialization. This can be read as a jump between hard-or-nothing continual-control strategies.  
\cgr{}'s \textsc{confine} term is a soft continual version of these
projections filling the gap of previous hard-or-nothing approaches.
\cgr{}'s \textsc{anchor} term adds the
activation channel that these methods leave unconstrained
(\S\ref{sec:method}).

\textbf{Activation-space methods} in PEFT has received far less attention~\citep{vefa2025}.  Rather than constraining weights, these methods regularize internal representations via output distillation~\citep{lwf}, whole-feature alignment~\citep{kumar2022finetuning,ldifs}, or all-active SAE feature matching~\citep{saefd}. Beyond alignment, gradient projection techniques~\citep{dgsnet2025,lrcp2026} isolate updates for specialized tasks.  However, these existing anchors operate indiscriminately on raw hidden states or full feature maps, with no mechanism to select or name which concepts to preserve. In contrast, \cgr{} anchors a selected, interpretable subset of named concepts.

\textbf{Sparse autoencoders} decompose the residual stream into a dictionary of named concept directions~\citep{bricken2023monosemanticity}. Most work reads the dictionary to understand, steer, or edit a frozen
model~\citep{marks2024sfc,chalnev2024saets}, or adapts the model around an
SAE integrated into the forward path~\citep{chen2025loraforsae}. A few recent works bring the dictionary into the fine-tuning objective. SASFT~\citep{sasft} suppresses selected features during supervised
fine-tuning, and SSPU~\citep{sspu} unlearns target knowledge.
SAE-FD~\citep{saefd} distills \emph{all} active features under LoRA in
continual learning, and we evaluate it as a control
(\S\ref{sec:experiments}). CWAC~\citep{cwac} couples a hard weight projection with an SAE latent anchor to preserve safety concepts under \emph{full} fine-tuning. \cgr{} instead preserves pretrained knowledge under a PEFT adapter, splits one frozen dictionary into disjoint task and retention concept roles, and bounds the retention concept drift with its anchor term.

\section{Method}
\label{sec:method}

Concept-Guided Retention (\cgr) introduces two carefully designed regularizers for
adapter base, both read from a single frozen sparse autoencoder (Figure~\ref{fig:pipeline}). A \textsc{confine} term that keeps the low-rank weight update inside the
\emph{task}-concept subspace. An \textsc{anchor} term that holds the disjoint
\emph{retention}-concept activations to the frozen-model activations. The anchor equals the retained concepts' expected squared drift. We show that this drift is bounded between two weight-space quantities
(Prop.~\ref{prop:drift-bounds}). The lower side shows that a
weight-subspace constraint alone carries no retention guarantee, and the
upper side bounds the drift from the weights alone. \cgr{} minimizes the task's fine-tuning loss
$\mathcal{L}_{\mathrm{task}}$ (the classification loss for vision, the
language-modelling loss for text) together with the two auxiliary penalties,
\begin{equation}
  \mathcal{L}
  \;=\; \mathcal{L}_{\mathrm{task}}
  \;+\; \lambda_{\mathrm{conf}}\,\Lconf
  \;+\; \lambda_{\mathrm{anc}}\,\Lanchor
  \label{eq:total}
\end{equation}
with coefficients $\lambda_{\mathrm{conf}},\lambda_{\mathrm{anc}}\ge 0$.

\cgr{} makes deliberate
design choices. We constrain the weight space and the activation space
through two disjoint concept sets (\S\ref{sec:selection}). We confine
the weight update with a continual soft projection instead of hard projection (\S\ref{sec:confine}). We anchor the retention concepts'
activations because weight-space control alone does not constrain their drift(\S\ref{sec:anchor}). Both constraints act in a named concept basis, so the adapted features remain interpretable after fine-tuning.

\subsection{Preliminaries}
\label{sec:prelim}

\paragraph{Adapters for fine tuning.} Given $W_0\in \mathbb{R}^{d_{out}\times d_{in}}$ a pretrained matrix, an adapter adds a small
trainable update alongside it, $W_0+\Delta W$. For LoRA, this update is
low-rank with $r\ll\min(d_{out},d_{in})$.
\begin{equation}
  \Delta W \;=\; BA,
  \quad B\in\mathbb{R}^{d_{out}\times r},\;\; A\in\mathbb{R}^{r\times d_{in}}.
  \label{eq:lora}
\end{equation}
More generally, $\Delta W$ is any additive weight update. \cgr{} regularizes
the update $\Delta W$ and the activation change it causes on top of the frozen
model, and adds no trainable parameters of its own.
\paragraph{Sparse autoencoders.} A \emph{sparse autoencoder} (SAE) is a
trained map that re-describes a hidden activation $h\in\mathbb{R}^{d}$ as a
sparse combination of $m\gg d$ \emph{concept} directions. \cgr{} keeps this map
frozen and only reads from it. Two maps do this,
an \emph{encoder} that scores how strongly each concept is present, and a
\emph{decoder} that rebuilds $h$ from those scores,
\begin{equation}
  z(h) \;=\; {g\big(W_{enc}h+b_{enc}\big)}, \,
  \mathrm{dec}(z) \;=\; W_{dec} z + b_{dec}.
  \label{eq:sae}
\end{equation}
The score vector $z(h)\in\mathbb{R}^{m}$ represents \emph{concepts}, and it is sparse,
so most concepts are inactive for any given input. The gate $g$ is the
SAE-specific activation (i.e., \textsc{ReLU}, \textsc{TopK}, etc.). The decoder weight matrix
$W_{\!dec}\in\mathbb{R}^{d\times m}$ stores the $j$-th concept direction at column
$W_{\!dec}{[:,j]}$, and the encoder weight matrix $W_{\!enc}\in\mathbb{R}^{m\times d}$ at 
row $W_{enc}{[j,:]}$. The reconstruction term $W_{dec}\,z$ sums active concepts' directions.

\subsection{Reading concepts}
\label{sec:setup}
In a transformer, the \emph{residual stream} is a shared vector that every component
writes into and later components read from~\citep{elhage2021mathematical}. At a \emph{read-out point} in each layer, the frozen model produces a residual stream vector that %
we denote
$h_0(x)\in\mathbb{R}^{d}$ ($d=d_{out}$). Here, $x\in\mathbb{R}^{d_{in}}$ is
an internal representation from earlier layers (different from the raw
input) entering the last adapted weight matrix $W_0$.
Fine-tuning adds the adapter's contribution to this
vector, $h_{\mathrm{ft}}(x)=h_0(x)+\Delta W\,x$. Everything computed before is preserved in $h_0(x)$, and only $\Delta W x$ is new. This holds exactly
when $W_0$ writes directly into the stream and to first order otherwise.
\cgr{} works by limiting the effect of this added term at the read-out points. However, to define the concepts, we use an SAE that was pre-trained at a specific \emph{read-out point} on the residual stream of a fixed layer $\ell$.
\cgr{} reads \emph{how strongly} each concept fires. Each concept $j \in \mathcal{S}$ has an encoder row $W_{enc}{[j,:]}$ and bias
entry $b_{enc}{[j]}$. We set its \emph{concept read-out} to be a non-negative score, 
\begin{equation}
f_j(h)\;\triangleq\;\mathrm{ReLU}\big({W_{enc}{[j,:]}}h+b_{enc}{[j]}\big).
\label{eq:readout}
\end{equation}
We use a plain $\mathrm{ReLU}$ rather than the SAE's own gate $g$, so the read-out has the same form across SAE families. It also reads a concept whenever it is present, even where the SAE's gate would suppress it.

\subsection{Concept selection: task vs.\ retention sets}
\label{sec:selection}

Concept selection is key to \cgr{}. First, \cgr{} selects a
task set \Stsk{} that contains the concepts the fine-tuning task drives most strongly. It confines the weight update to the subspace these concepts span. Second, \cgr{} selects a retention set \Sret{} that contains the off-task general-purpose concepts on the weight update path. \cgr{} anchors the activations of these concepts to those of the frozen model {because their} activation drift drives forgetting {most}. Both sets are read from the task's own inputs $\mathcal{D}_{\mathrm{task}}$ on the frozen model, with no held-out data.

\paragraph{Task set.}An unconstrained update may write along off-task directions and cost the pretrained model its retention. The confined update must still
learn the task so the allowed subspace must span the directions that causally
drive the task loss. We select the concepts the task actually drives, measured by
their \emph{causal} effect on the task loss via attribution
patching~\citep{nanda2023attribution,marks2024sfc}. Specifically, each concept receives
an attribution score aggregated over the task inputs,
$a_j=\big|\mathbb{E}\big[\textstyle\sum_t z_j\,\partial\mathcal{L}_{\mathrm{task}}/\partial z_j\big]\big|$,
where $z_j$ is the SAE's own activation of concept $j$ and
$\mathcal{L}_{\mathrm{task}}$ is the fine-tuning loss of Eq.~\eqref{eq:total}.
The top-scoring concepts form the task set $\mathcal{S}_{\mathrm{tsk}}$. Stacking their decoder columns
and orthonormalizing them with a thin QR factorization~\citep{golub2013matrix} gives an
orthonormal basis of the task-concept subspace,
\begin{equation}
  U_{\mathrm{tsk}} \;=\; \mathrm{orth}\big(\,[\,W_{\!dec}{[:,j]}\,]_{j\in\mathcal{S}_{\mathrm{tsk}}}\,\big),
  \, U_{\mathrm{tsk}}^{\top}U_{\mathrm{tsk}} = I_{|\mathcal{S}_{\mathrm{tsk}}|}.
  \label{eq:qr}
\end{equation}
Here $U_{\mathrm{tsk}}\in\mathbb{R}^{d\times|\mathcal{S}_{\mathrm{tsk}}|}$
holds one orthonormal column per task concept. The \textsc{confine} term
uses this basis (\S\ref{sec:confine}).
\paragraph{Retention set.} For each concept $j$ we compute its \emph{mean activation}
on the frozen single-concept read-out $f_j$ of Eq.~\eqref{eq:readout},
averaging over the task inputs and over the $T$ token
positions of each input (patch tokens for vision, text tokens under the
attention mask for language),
\begin{equation}
  \phi_j \;=\; \mathbb{E}_{x\sim\mathcal{D}_{\mathrm{task}}}\Big[\tfrac{1}{T}\textstyle\sum_t f_{j}(h_0(x))_t\Big]
  \label{eq:firing}
\end{equation}
and form $\mathcal{S}_{\mathrm{ret}}$ from the $k$ concepts of largest $\phi_j$ that
lie outside the task set,
\begin{equation}
  \mathcal{S}_{\mathrm{ret}} \;=\; \mathrm{Top}\text{-}k\,\big\{\,\phi_j : j\notin\mathcal{S}_{\mathrm{tsk}}\,\big\}
  \label{eq:ret}
\end{equation}
where $\mathrm{Top}\text{-}k$ returns the indices of the $k$ largest scores.
A high $\phi_j$ marks a concept that fires on many of the task's inputs.
{
Fine-tuning writes $\Delta W$ along directions those same inputs activate,
so the highest-$\phi_j$ \textbf{off-task concepts (general-purpose features) are the
features most at risk that sit
directly in the update's path}. These are the
features the \textsc{anchor} holds
(\S\ref{sec:anchor}), through the stacked read-out
$f_{\mathcal{S}_{\mathrm{ret}}}(h)=\big(f_j(h)\big)_{j\in\mathcal{S}_{\mathrm{ret}}}$.
}

\subsection{The confine term}
\label{sec:confine}
We softly confine the update $\Delta W$ to the task-concept subspace by penalizing its perpendicular component. Writing $P_{\mathrm{tsk}}=U_{\mathrm{tsk}} U_{\mathrm{tsk}}^{\top}$ and $P_{\mathrm{tsk}}^{\perp}={I_d}-P_{\mathrm{tsk}}$,
\begin{equation}
  \Lconf
  \;=\; \big\|\,P_{\mathrm{tsk}}^{\perp}\,\Delta W\,\big\|_F^{2}
  \;=\; \big\|\,({I_d} - U_{\mathrm{tsk}} U_{\mathrm{tsk}}^{\top})\,BA\,\big\|_F^{2}
  \label{eq:confine}
\end{equation}
Since $\Delta W=BA$, its column space lies in $\mathrm{col}(B)$. The penalty therefore acts on the up-projection $B$. Minimizing $\Lconf$ steers the columns of $B$ towards the task directions. Since this is done for the updates $\Delta W$ that write directly in the residual stream, the confinement acts by backpropagation on the full transformer and feed-forward block updates. \cgr{}'s constraint is a \textbf{continual soft projection analogue of the hard subspace constraints in the weight-geometry family}, a penalty on the residual-facing factor applied at every step, rather than an exact projection (OPLoRA) or a subspace fixed once at initialization (LoRA-Null, SC-LoRA).

\begin{proposition} [Subspace confinement]
\label{prop:confine}
    Given a sequence of stationary points $\Delta W^q$ of
    $\mathcal{L}_{\mathrm{task}}(\Delta W)+\lambda_{\mathrm{conf}}^q\|P_{\mathrm{tsk}}^{\perp}\Delta W\|_F^2$, if $\|P_{\mathrm{tsk}}^\perp \nabla\mathcal{L}_{task}(\Delta W^q)\|\leq C$ for all $q$, then
    the perpendicular component obeys
    $\|P_{\mathrm{tsk}}^{\perp}\Delta W^q\|_F=O(1/\lambda_{\mathrm{conf}}^q)\to 0$ as
    $\lambda^q_{\mathrm{conf}}\to\infty$, recovering the hard projection
    $P_{\mathrm{tsk}}^\perp\Delta W = 0$.
\end{proposition}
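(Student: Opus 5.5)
The plan is to use the first-order stationarity condition of the penalized objective and project it onto the orthogonal complement of the task subspace. At a stationary point $\Delta W^q$ of $\mathcal{L}_{\mathrm{task}}(\Delta W)+\lambda^q_{\mathrm{conf}}\|P_{\mathrm{tsk}}^{\perp}\Delta W\|_F^2$, the gradient vanishes. Since $P_{\mathrm{tsk}}^{\perp}$ is symmetric and idempotent, the gradient of the penalty $\|P_{\mathrm{tsk}}^{\perp}\Delta W\|_F^2=\operatorname{tr}(\Delta W^{\top}P_{\mathrm{tsk}}^{\perp}\Delta W)$ is $2P_{\mathrm{tsk}}^{\perp}\Delta W$. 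Stationarity therefore reads
\begin{equation*}
\nabla\mathcal{L}_{\mathrm{task}}(\Delta W^q)+2\lambda^q_{\mathrm{conf}}\,P_{\mathrm{tsk}}^{\perp}\Delta W^q=0 .
\end{equation*}

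Next I left-multiply this identity by $P_{\mathrm{tsk}}^{\perp}$. Idempotence gives $P_{\mathrm{tsk}}^{\perp}P_{\mathrm{tsk}}^{\perp}\Delta W^q=P_{\mathrm{tsk}}^{\perp}\Delta W^q$, so
\begin{equation*}
P_{\mathrm{tsk}}^{\perp}\Delta W^q=-\frac{1}{2\lambda^q_{\mathrm{conf}}}\,P_{\mathrm{tsk}}^{\perp}\nabla\mathcal{L}_{\mathrm{task}}(\Delta W^q).
\end{equation*}
Taking Frobenius norms and applying the hypothesis $\|P_{\mathrm{tsk}}^{\perp}\nabla\mathcal{L}_{\mathrm{task}}(\Delta W^q)\|\le C$ yields $\|P_{\mathrm{tsk}}^{\perp}\Delta W^q\|_F\le C/(2\lambda^q_{\mathrm{conf}})$. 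This is $O(1/\lambda^q_{\mathrm{conf}})$ and tends to $0$ as $\lambda^q_{\mathrm{conf}}\to\infty$. If the hypothesis is stated in another norm, such as the operator norm, the same argument goes through by equivalence of norms in finite dimensions, at the cost of a constant. Finally, if the iterates $\Delta W^q$ have a limit point $\Delta W^\ast$, continuity of the linear map $P_{\mathrm{tsk}}^{\perp}$ gives $P_{\mathrm{tsk}}^{\perp}\Delta W^\ast=0$. This recovers the hard projection constraint.

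The main subtlety is the parametrization. The statement treats $\Delta W$ as the free variable, but in LoRA the optimization variables are $(B,A)$. Stationarity in $(B,A)$ only gives $G A^{\top}=0$ and $B^{\top}G=0$, where $G$ is the full gradient, and not $G=0$. I would therefore state explicitly that stationarity is taken with respect to $\Delta W$, as in the proposition, which covers full-rank or generic additive adapters. For the factored case I would note that one can still derive $\|P_{\mathrm{tsk}}^{\perp}BA\,A^{\top}\|\le C\|A\|/(2\lambda^q_{\mathrm{conf}})$ from the $B$-block condition. This gives the same rate whenever $AA^{\top}$ stays uniformly well-conditioned along the sequence. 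I expect this to be the only point that needs care; the rest is the one-line projected stationarity computation.
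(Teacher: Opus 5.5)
Your proposal is correct and follows the paper's proof essentially line for line: write the first-order stationarity condition, left-multiply by the idempotent $P_{\mathrm{tsk}}^{\perp}$, and take norms under the hypothesis to get $\|P_{\mathrm{tsk}}^{\perp}\Delta W^q\|_F\le C/(2\lambda^q_{\mathrm{conf}})$. Your extra remarks on the factored $(B,A)$ parametrization and on limit points go beyond the paper, which simply takes stationarity in $\Delta W$, and they are sound.
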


The task subspace is built from the directions with the highest attribution
scores, so the projected gradient
$P_{\mathrm{tsk}}^\perp\nabla_{\Delta W}\mathcal{L}_{\mathrm{task}}$ retains
only the components off those selected directions (proof in
App.~\ref{app:pf-confine}). This makes the
bounded-gradient condition plausible in practice, although it does not by
itself guarantee a uniform bound along the sequence.
The attribution-based
choice of task directions and the confinement penalty then act in tandem to
control unnecessary drift along directions deemed less important to the
current task. Unlike a hard projection, which degrades performance when enforced strictly (Table~\ref{tab:ablation}), the soft penalty does not fully
eliminate updates outside the selected subspace, allowing the network to use
those directions when they provide sufficient improvement on the current task. Robustness with respect to the choice of
$\lambda_{\mathrm{conf}}$ in \cgr{} was empirically tested for several $\lambda_{\mathrm{conf}}$ values (Table~\ref{tab:lamconf}, App.~\ref{app:lambda}), leaving both accuracy and retention on a plateau above the respective limits of no \textsc{confine} term and hard-projection. In this way, the penalty helps preserve the pretrained capabilities the retention probes measure while retaining the plasticity required to learn the current task.

\paragraph{Why the {confine} term is not enough on its own.}
The \textsc{confine} penalty restricts
$\Delta W$, and that restriction alone lifts zero-shot retention well above
the unconstrained base (Table~\ref{tab:ablation}). However, retention is also related to the drift of the retention-concept \emph{activations},
$\Delta f_{\mathrm{ret}}(x)\triangleq
f_{\mathcal{S}_{\mathrm{ret}}}(h_{\mathrm{ft}}(x))
-f_{\mathcal{S}_{\mathrm{ret}}}(h_0(x))$, as shown in Figures~\ref{fig:drift_pred} and~\ref{fig:drift_ret}. Therefore, soft confine by itself is not sufficient for retention.

\paragraph{Why soft confine.} The lower bound in Proposition~\ref{prop:drift-bounds} shows that hard confine on the weight space, e.g., by projection, is also not enough to close the drift gap. Furthermore, our experiment shows that hard projection drives retention below the unconstrained base and adding
the \textsc{anchor} recovers only part of the loss
(Table~\ref{tab:ablation}).  The proven limitation of both soft and hard confine motivates the necessity of the anchor term, discussed next. 

\subsection{The {anchor} term}
\label{sec:anchor}
We anchor the read-out of the \emph{retention} concepts of the adapted model
to that of the frozen model,
\begin{equation}
  \Lanchor
  =\mathbb{E}_x\big\|
    f_{\mathcal{S}_{\mathrm{ret}}}(h_{\mathrm{ft}}(x))
    -f_{\mathcal{S}_{\mathrm{ret}}}(h_0(x))
  \big\|_2^2 .
  \label{eq:anchor}
\end{equation}
This is the expected squared retention drift
$\mathbb{E}_x\|\Delta f_{\mathrm{ret}}(x)\|_2^2$, and the next proposition
explains it from the weight side, showing along the way why no
weight-subspace constraint can control it.

The following proposition assumes that only the last layer
before the SAE read-out is adapted, so $x$ is a fixed internal
representation and $h_0(x)$ is the frozen output of that layer. The purpose of this
simplified setting is to obtain intuition about the relation between concept
directions and weight-update directions in a controlled manner. It does not
describe exactly what happens in practice, where earlier adapted layers also
modify the representation $x$, but it provides a model for understanding why
the proposed method may outperform weight-only constraints, as observed
empirically. It also shows why the weight subspace constraint of
Proposition~\ref{prop:confine} is not enough to control the
retention-concepts drift.

\begin{proposition}[Retention-Concepts and Weight-Change relation] \label{prop:drift-bounds}\label{prop:insufficiency} 
Let $W_{\mathrm{ret}}=W_{\mathrm{enc}}[\mathcal{S}_{\mathrm{ret}},:]$, and assume that the inputs have finite second moment $\Sigma=\mathbb{E}[xx^\top]$. Let $\mathcal{J}\subseteq\mathcal{S}_{\mathrm{ret}}$ and let $A$ be a subset of the input space such that, for every $x\in A$, all concepts in $\mathcal{J}$ are active at $h_0(x)$ and remain active under the perturbation $\Delta W x$. Write $W_{\mathcal{J}}=W_{\mathrm{ret}}[\mathcal{J},:]$ and define the restricted second moment $\Sigma_A=\mathbb{E}_x\!\left[\mathbf{1}_A(x)\,xx^\top\right]$.
Let us write
\begin{equation*}
  \mathcal{L}_{\mathrm{low}}
  =
  \mathrm{tr}\!\left(
    W_{\mathcal{J}}\Delta W\Sigma_A
    \Delta W^\top W_{\mathcal{J}}^\top
  \right),
\end{equation*}
\begin{equation*}
  \mathcal{L}_{\mathrm{up}}
  =
  \mathrm{tr}\!\left(
    W_{\mathrm{ret}}\Delta W\Sigma
    \Delta W^\top W_{\mathrm{ret}}^\top
  \right).
\end{equation*}
Then
\begin{equation}
  \mathcal{L}_{\mathrm{low}}
  \;\leq\;
  \mathbb{E}_x\big\|\Delta f_{\mathrm{ret}}(x)\big\|_2^2
  \;\leq\;
  \mathcal{L}_{\mathrm{up}}.
  \label{eq:drift-bounds}
\end{equation}
Moreover, even on the hard-projection surface $P_{\mathrm{tsk}}^\perp\Delta W=0$, neither $\mathcal{L}_{\mathrm{low}}$ nor $\mathcal{L}_{\mathrm{up}}$ is controlled by the projection  constraint {alone.} Whenever the task subspace contains directions read by the corresponding retention concepts and excited by the relevant input second moments {both bounds can grow}. A weight-subspace constraint alone is therefore insufficient to guarantee activation-space retention.
{(The proof of the proposition is in App.~\ref{app:pf-insuff})}
\end{proposition}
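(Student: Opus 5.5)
The argument is pointwise in $x$ and then integrates. Throughout we use the simplified setting fixed before the proposition: $h_{\mathrm{ft}}(x)=h_0(x)+\Delta W x$ with $x$ fixed. For each retention concept $j\in\mathcal{S}_{\mathrm{ret}}$ write the pre-activation $u_j(h)=W_{enc}[j,:]h+b_{enc}[j]$. By Eq.~\eqref{eq:readout}, the per-concept drift is
\[
\Delta f_j(x)=\mathrm{ReLU}\big(u_j(h_0(x))+W_{enc}[j,:]\Delta W x\big)-\mathrm{ReLU}\big(u_j(h_0(x))\big).
\]

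\textbf{Upper bound.} I would use that $\mathrm{ReLU}$ is $1$-Lipschitz, which gives $|\Delta f_j(x)|\le |W_{enc}[j,:]\Delta W x|$. Squaring and summing over $j\in\mathcal{S}_{\mathrm{ret}}$ yields
\[
\|\Delta f_{\mathrm{ret}}(x)\|_2^2\le \|W_{\mathrm{ret}}\Delta W x\|_2^2=\mathrm{tr}\big(W_{\mathrm{ret}}\Delta W\,xx^\top\Delta W^\top W_{\mathrm{ret}}^\top\big).
\]
Taking expectations and using linearity of trace and expectation gives $\mathcal{L}_{\mathrm{up}}$. The finite second moment $\Sigma$ guarantees that everything is integrable.

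\textbf{Lower bound.} I would drop terms in two steps. First, keep only the coordinates $j\in\mathcal{J}$, which is valid since all terms are nonnegative. Second, multiply by $\mathbf{1}_A(x)\le 1$. On $A$, every $j\in\mathcal{J}$ has $u_j>0$ both before and after the perturbation. So $\mathrm{ReLU}$ acts as the identity at both points, and $\Delta f_j(x)=W_{enc}[j,:]\Delta W x$ exactly. Hence
\[
\|\Delta f_{\mathrm{ret}}(x)\|^2\ge \mathbf{1}_A(x)\,\|W_{\mathcal{J}}\Delta W x\|^2,
\]
and the same trace manipulation with $\mathbf{1}_A xx^\top$ yields $\mathcal{L}_{\mathrm{low}}$. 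This is all routine.

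\textbf{Insufficiency.} I would exhibit the failure by a scaling argument on the constraint surface. Pick a unit vector $u\in\mathrm{col}(U_{\mathrm{tsk}})$ with $W_{\mathcal{J}}u\neq 0$; this is the hypothesis that the task subspace contains directions read by retention concepts. Pick $v$ with $v^\top\Sigma_A v>0$, which is the hypothesis that inputs on $A$ excite $v$. Set $\Delta W=t\,uv^\top$, so that $P^\perp_{\mathrm{tsk}}\Delta W=0$ for every $t$. Then
\[
\mathcal{L}_{\mathrm{low}}=t^2\|W_{\mathcal{J}}u\|^2\,v^\top\Sigma_A v,
\]
and $\mathcal{L}_{\mathrm{up}}\ge\mathcal{L}_{\mathrm{low}}$ because $\Sigma\succeq\Sigma_A$ and $W_{\mathcal{J}}$ is a row-subset of $W_{\mathrm{ret}}$. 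Both quantities grow quadratically in $t$, and the projection constraint never sees it.

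\textbf{Main obstacle.} The delicate step is that the set $A$ itself depends on $\Delta W$ through the ``remain active'' condition. As $t$ grows, $A$ may shrink, so $\Sigma_A$ is not fixed along the family. I would first handle this by choosing the sign of $t$ so that $W_{enc}[j,:]u\,v^\top x\ge 0$ on a positive-mass set of inputs where the concepts in $\mathcal{J}$ are already active. On that set the pre-activations only increase, so $A$ is non-shrinking in $t$. Alternatively, since $\mathcal{J}$ and $A$ are free in the statement, one can fix $\mathcal{J}$ as a single concept, which makes the sign choice clean. Failing that, I would state the growth claim for $\mathcal{L}_{\mathrm{up}}$ unconditionally and for $\mathcal{L}_{\mathrm{low}}$ for all $t$ small enough that $A$ is unchanged. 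Even in that fallback the bound is linear-in-$t^2$ and not controlled by the constraint.
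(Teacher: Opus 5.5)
Your proposal is correct and follows the paper's proof essentially step for step: the $1$-Lipschitz ReLU bound gives $\mathcal{L}_{\mathrm{up}}$, exact linearity on $A$ plus dropping nonnegative terms gives $\mathcal{L}_{\mathrm{low}}$, and the rank-one family $\Delta W_t=t\,uv^\top$ with $P^\perp_{\mathrm{tsk}}u=0$, $W_{\mathcal{J}}u\neq 0$, $v^\top\Sigma_A v>0$ gives insufficiency. You go slightly beyond the paper, which simply restricts to ``admissible'' families on which $\mathcal{J}$ stays active on $A$: your sign choice of $t$ for a single concept (or a same-sign subset of $\mathcal{J}$) actually constructs such a family, and your algebraic observation $\mathcal{L}_{\mathrm{up}}\ge\mathcal{L}_{\mathrm{low}}$ makes the ``both bounds grow'' claim explicit.
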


Eq \eqref{eq:drift-bounds} provides lower and upper bounds on the retention-concepts drift. The lower bound indicates that if a subset of the task data activates a retention concept, then movement restricted to the task-weight subspace may still amplify the drift of that concept. Although concepts correspond to separate coordinates in the latent space of the SAE, their encoder and decoder directions are not generally orthogonal when pulled back to the residual stream or weight spaces. This overlap is understandable because the retention concepts are themselves active on part of the task data. In this setting, the non-orthogonality may be advantageous: directly controlling retention-concept drift can steer movement toward task-relevant directions that learn the new task without unnecessarily disrupting the remaining concepts. 

CWAC~\citep{cwac} introduced the channel coupling via a
first-order argument for full fine-tuning in safety alignment tasks.
Prop.~\ref{prop:drift-bounds} strengthens it for the PEFT with exact bounds, and shows that confined updates can
\emph{amplify} drift. We anchor a disjoint off-task set
(\S\ref{sec:selection}) rather than one protected set. Further experimentally explored relations between weight space and concept activations are provided in App.~\ref{app:drift}.

\section{Experiments}
\label{sec:experiments}

For evaluation, we ask two questions. (i) \emph{Does adding \cgr{} improve retention over the
weight-only subspace-LoRA adapters?} (ii) \emph{Does the recipe transfer beyond
vision?} We evaluate on target-task accuracy and zero-shot retention on the
held-out probes after fine-tuning.
\begin{figure}[t]\centering
\includegraphics[width=\columnwidth]{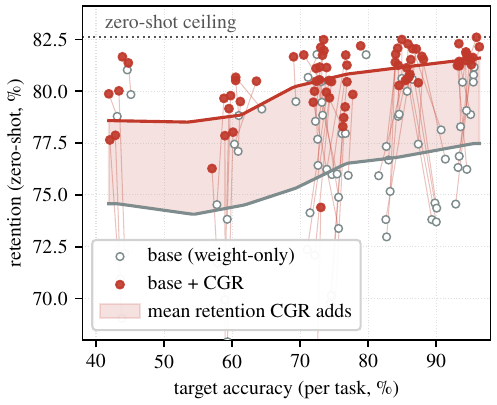}
\caption{{\cgr{} pushes the attainable-retention frontier toward the
ceiling (CLIP vision panel).} One point per (task, base) of
Table~\ref{tab:clip_vtab} where gray is for the base and red for ${+}$\cgr{}. The
shaded band shows the retention \cgr{} adds.}
\label{fig:vtab_frontier}
\end{figure}

\subsection{Setup}
\label{subsec:setup}

\paragraph{Backbones and SAEs.}
For vision, we adapt a frozen CLIP ViT-B/32 and read concepts from the frozen,
publicly released Prisma SAE~\citep{prisma2025} on the residual stream at
layer $\ell{=}8$. For language, we adapt Llama-3.1-8B and
read concepts from the frozen Llama-Scope SAE on the residual stream at layer
$19$ of $32$ \citep{he2024llamascope}. Both are intermediate read-out points, where semantic features are most
accessible~\citep{templeton2024scaling,lad2024stages}, before the final layers
specialize toward the output~\citep{gandelsman2024clip}. The
layer is fixed per modality, never tuned. App.~\ref{app:ev} reports
reconstruction faithfulness at the vision read-out layer for every task.
Both SAEs stay frozen and \cgr{} only reads from them, with
$|\mathcal{S}_{\mathrm{tsk}}|{=}|\mathcal{S}_{\mathrm{ret}}|{=}128$ for CLIP
and $|\mathcal{S}_{\mathrm{tsk}}|{=}256$,
$|\mathcal{S}_{\mathrm{ret}}|{=}128$ for Llama-3.1-8B. Neither size is tuned,
and App.~\ref{app:lambda} shows the effect of concept-set size
(Tables~\ref{tab:ksweep} and~\ref{tab:stasksweep}).

\begin{table*}[t]\centering\small\tabcolsep=1mm
\definecolor{cgrband}{gray}{0.90}
\begin{tabular}{l cc cc cc cc cc cc cc cc cc cc | c}\toprule
& \multicolumn{2}{c}{FGVC} & \multicolumn{2}{c}{CUB} & \multicolumn{2}{c}{DTD} & \multicolumn{2}{c}{Flow} & \multicolumn{2}{c}{Pets} & \multicolumn{2}{c}{EuroS} & \multicolumn{2}{c}{RESISC} & \multicolumn{2}{c}{DiabR} & \multicolumn{2}{c}{dSpr-O} & \multicolumn{2}{c}{SVHN} & \textbf{Avg} \\
\cmidrule(lr){2-3}\cmidrule(lr){4-5}\cmidrule(lr){6-7}\cmidrule(lr){8-9}\cmidrule(lr){10-11}\cmidrule(lr){12-13}\cmidrule(lr){14-15}\cmidrule(lr){16-17}\cmidrule(lr){18-19}\cmidrule(lr){20-21}
Method & Acc & Ret & Acc & Ret & Acc & Ret & Acc & Ret & Acc & Ret & Acc & Ret & Acc & Ret & Acc & Ret & Acc & Ret & Acc & Ret & Ret \\\midrule
Zero-shot
  & 19.3 & 82.6 & 70.9 & 82.6 & 54.6 & 82.6 & 73.0 & 82.6 & 89.8 & 82.6 & 40.5 & 82.6 & 61.9 & 82.6 & 35.4 & 82.6 & 9.3 & 82.6 & 60.4 & 82.6 & 82.6 \\
\midrule
LoRA
  & 59.3 & 67.9 & 74.6 & 70.1 & 71.5 & 74.1 & 93.2 & 76.3 & 82.6 & 73.8 & 93.6 & 78.3 & 83.0 & 76.7 & 72.3 & 78.6 & 43.6 & 70.9 & 89.4 & 73.8 & $74.1$ \\
\rowcolor{cgrband}
\;${+}$\cgr{}
  & 60.1 & 78.0 & 76.4 & 78.7 & 74.3 & 80.5 & 95.0 & 81.6 & 86.1 & 80.8 & 93.4 & 81.2 & 84.0 & 81.8 & 72.9 & 81.2 & 42.9 & 77.9 & 85.8 & 80.6 & $80.2$$^{\star}$ \\
OPLoRA
  & 58.9 & 69.9 & 75.1 & 69.3 & 72.1 & 72.1 & 93.4 & 76.9 & 81.5 & 75.9 & 95.1 & 78.9 & 82.7 & 77.3 & 72.6 & 76.4 & 44.2 & 72.2 & 89.8 & 74.6 & $74.4$ \\
\rowcolor{cgrband}
\;${+}$\cgr{}
  & 59.5 & 79.2 & 76.6 & 79.2 & 74.6 & 80.5 & 95.4 & 81.6 & 85.7 & 81.1 & 93.4 & 81.4 & 84.3 & 82.1 & 73.3 & 81.7 & 43.4 & 80.0 & 86.3 & 81.5 & $80.8$$^{\star}$ \\
LoRA-Null
  & 61.0 & 78.8 & 76.2 & 78.0 & 73.9 & 76.2 & 94.8 & 81.5 & 86.5 & 80.6 & 95.6 & 81.2 & 85.0 & 80.6 & 72.5 & 81.8 & 44.6 & 81.0 & 91.3 & 76.7 & $79.6$ \\
\rowcolor{cgrband}
\;${+}$\cgr{}
  & 60.5 & \textbf{80.5} & 76.9 & \textbf{80.5} & 76.7 & \textbf{81.7} & 95.9 & \textbf{82.6} & 87.1 & \textbf{82.0} & 93.3 & \textbf{82.3} & 85.0 & \textbf{82.5} & 73.5 & 82.0 & 43.9 & \textbf{81.7} & 87.9 & 81.7 & $\mathbf{81.7}$$^{\star}$ \\
CorDA
  & 57.8 & 71.7 & 72.5 & 65.9 & 73.3 & 78.8 & 93.9 & 80.4 & 84.4 & 78.8 & 94.5 & 79.1 & 84.6 & 78.9 & 73.2 & 79.5 & 43.1 & 78.8 & 90.1 & 74.4 & $76.6$ \\
\rowcolor{cgrband}
\;${+}$\cgr{}
  & 57.1 & 76.3 & 73.0 & 74.4 & 73.9 & 81.1 & 94.4 & 81.8 & 85.5 & 80.5 & 94.3 & 81.5 & 84.4 & 81.3 & 72.3 & 81.1 & 41.9 & 79.9 & 87.4 & 80.4 & $79.8$$^{\star}$ \\
AF
  & 60.9 & 77.1 & 77.1 & 75.9 & 74.7 & 76.0 & 95.5 & 80.8 & 87.4 & 77.4 & 95.5 & 80.4 & 85.7 & 80.0 & 72.8 & 80.5 & 45.1 & 79.8 & 90.7 & 78.1 & $78.6$ \\
\rowcolor{cgrband}
\;${+}$\cgr{}
  & 61.2 & 79.5 & 77.7 & 79.8 & 76.9 & 81.3 & 96.3 & 82.1 & 88.1 & 81.5 & 94.4 & 81.9 & 85.9 & 82.3 & 73.5 & \textbf{82.5} & 44.8 & 81.4 & 86.6 & \textbf{82.0} & $81.4$$^{\star}$ \\
SAILS
  & 59.0 & 65.8 & 74.4 & 69.7 & 71.0 & 72.4 & 92.9 & 74.6 & 82.7 & 73.0 & 94.5 & 76.2 & 83.2 & 75.2 & 72.6 & 77.7 & 43.8 & 69.0 & 90.0 & 73.7 & $72.7$ \\
\rowcolor{cgrband}
\;${+}$\cgr{}
  & 59.0 & 77.9 & 76.3 & 78.3 & 74.3 & 79.7 & 95.2 & 81.3 & 84.5 & 80.3 & 93.1 & 81.2 & 84.0 & 81.4 & 72.1 & 80.0 & 42.0 & 77.7 & 86.0 & 80.7 & $79.8$$^{\star}$ \\
\bottomrule\end{tabular}
\caption{{Results of \cgr{} on ten CLIP ViT-B/32 tasks.} Each adapter is shown with and without \cgr{}. AF is AdaptFormer. {Acc} is target-task accuracy. {Ret} is zero-shot retention, the mean accuracy on four held-out probes. {Avg} is the mean retention over the tasks. Best retention per task and best Avg in {bold}. The star ($^{\star}$) marks a statistically significant \cgr{} retention gain across tasks.}
\label{tab:clip_vtab}
\end{table*}

\paragraph{Baselines.}
We evaluate \cgr{} on six adapter families. Each base runs alone and as
``base${+}$\cgr{}'', so every comparison is a controlled ablation of the
\cgr{} terms with the adapter held fixed. \textbf{LoRA}~\citep{hu2022lora} and
the parallel-adapter \textbf{AdaptFormer}~\citep{chen2022adaptformer} are
retention-agnostic bases. \textbf{OPLoRA}~\citep{oplora},
\textbf{LoRA-Null}~\citep{loranull}, \textbf{CorDA}~\citep{corda}, and
\textbf{SAILS}~\citep{sails} are weight-geometry retention methods that
together cover the family's subspace constructions (top-$k$ SVD, activation null space, input covariance, and SAE directions). Anchors on the activation
channel are compared in the ablation of \S\ref{subsec:ablation}.
App.~\ref{app:baselines} describes each base.
\begin{table}[t]\centering\small\tabcolsep=3pt

\definecolor{cgrband}{gray}{0.90}
\begin{tabular}{l ccc cc c}\toprule
 & \multicolumn{5}{c}{World-knowledge retention (\%)} & Target \\
\cmidrule(lr){2-6}\cmidrule(lr){7-7}
Method & Trivia & NQ-O & WebQS & Avg1 & Per & MTB \\\midrule
Zero-shot & 61.5 & 7.9 & 11.6 & 27.0 & 100.0 & -- \\
\midrule
LoRA              & 27.6 & 11.1 & \textbf{12.4} & 17.1 & 81.6 & 5.8 \\
\rowcolor{cgrband}
\quad +\cgr{}     & 43.5 & 12.0 & \textbf{12.4} & 22.6 & 90.2 & 6.2 \\
LoRA-Null         & 47.0 & 9.5  & 9.8  & 22.1 & 87.1 & 3.5 \\
\rowcolor{cgrband}
\quad +\cgr{}     & 55.9 & 9.1  & 9.7  & \textbf{24.9} & 91.5 & 3.3 \\
CorDA             & 40.5 & 12.1 & 10.8 & 21.1 & 86.1 & 5.3 \\
\rowcolor{cgrband}
\quad +\cgr{}     & 45.7 & \textbf{13.2} & 9.4  & 22.8 & 85.0 & 5.3 \\
OPLoRA            & 40.2 & 10.2 & 11.5 & 20.6 & 88.0 & 5.7 \\
\rowcolor{cgrband}
\quad +\cgr{}     & 51.1 & 9.4  & 11.4 & 24.0 & \textbf{93.8} & 5.6 \\
SAILS             & 32.0 & 11.2 & 12.2 & 18.5 & 84.0 & 5.8 \\
\rowcolor{cgrband}
\quad +\cgr{}     & 42.9 & 11.6 & 12.1 & 22.2 & 89.9 & 5.7 \\
AdaptFormer       & 38.1 & 3.7  & 8.2  & 16.6 & 59.6 & 5.0 \\
\rowcolor{cgrband}
\quad +\cgr{}     & \textbf{59.4} & 6.2  & 9.9  & 25.1 & 86.7 & 4.4 \\
\bottomrule\end{tabular}
\caption{Results of \cgr{} on Llama-3.1-8B
instruction-following tasks. Each adapter is shown with and without \cgr{}. {Avg1} is
the average preservation score. {Per} is the percentage of preserved
performance relative to the pre-trained model. {MTB} is the MTBench
score. Trivia and NQ-O abbreviate TriviaQA and
NQ-Open.}\label{tab:llm_instruct}
\end{table}

\paragraph{Protocol.}
Vision methods fine-tune on ten classification tasks, eight from the
VTAB-1k benchmark~\citep{zhai2019vtab} and the two fine-grained tasks
FGVC-Aircraft~\citep{maji2013fgvc} and CUB-200~\citep{wah2011cub}. The language experiment follows
the instruction-tuning forgetting protocol of \citet{loranull}: fine-tune on WizardLM
instruction data and measure zero-shot world-knowledge retention on
TriviaQA, NQ-Open, and WebQS with \texttt{lm\_eval} 0.4.9.
Instruction-following quality is MTBench single-answer grading (1--10) with a Claude Opus 4.8 judge. The dataset-specific details are provided in
App.~\ref{app:datasets}.  All experiments are mean over three seeds, with
$\pm$std where shown. The LoRA-family adapters use rank $8$ for vision and rank $128$ for language,
where they share one trainable budget of $4.01\%$ of parameters. AdaptFormer
uses a bottleneck of $16$ and $64$ respectively. \cgr{} adds no parameters to
any base.
Every hyperparameter for
both modalities, shared across all bases per modality, is listed in
App.~\ref{app:hparams}.

\subsection{The accuracy--retention frontier}
\label{subsec:clip}
\paragraph{Vision.}
Retention is measured after fine-tuning as zero-shot accuracy on four
held-out \emph{natural} probes (CIFAR-10/100, Caltech-101, SUN397); the
frozen backbone scores $82.6$ on this set. Table~\ref{tab:clip_vtab} and
Figure~\ref{fig:vtab_frontier} present the per-task results. We observe that
\cgr{} improves retention on every base at comparable accuracy, and all
{$60$} dataset$\times$base cells improve (paired Wilcoxon
{$p\!\le\!0.002$} per base). Plain LoRA gains {$6.1$ points
($74.1\to80.2$)}. LoRA-Null, the strongest
weight-only baseline, still gains ({$79.6\to81.7$, within $0.9$ of the
ceiling}), and OPLoRA, itself a hard-projection retention method, gains
{$+6.4$} at matched accuracy. Across the 10 tasks, the one exception is accuracy on SVHN and dSprites,
two tasks that read out digit identity or synthetic pose rather than natural
object categories, and there \cgr{} gains retention at an accuracy cost. App.~\ref{app:probes} shows detailed vision panel retention performance for every base.

\paragraph{Language (Llama-3.1-8B).}
\label{subsec:llm}
We observe a similar trend on the language instruction-tuning task
(Table~\ref{tab:llm_instruct}), where retention improves on all six bases on average.
Vanilla LoRA
forgets the most (TriviaQA $61.6\to27.6$) and \cgr{} recovers nearly half of
the drop; \cgr{} on OPLoRA improves $40.2\to51.1$. MTBench performance stays close
to the original on five of the six bases. AdaptFormer is the one
exception, achieving the largest retention gain (TriviaQA $38.1\to59.4$) at an instruction-following cost ($5.03\to4.38$).
App.~\ref{app:llmstd} shows detailed retention performance for every base.

\subsection{Ablations}
\label{subsec:ablation}
\label{sec:ablation}

\begin{table}[t]\centering\small\tabcolsep=3pt

\newcommand{\msa}[2]{$#1$}  %
\begin{tabular}{l cc cc}\toprule
 & \multicolumn{2}{c}{FGVC} & \multicolumn{2}{c}{CUB} \\
\cmidrule(lr){2-3}\cmidrule(lr){4-5}
Configuration & Acc & Ret & Acc & Ret \\\midrule
LoRA (no \cgr{})              & \msa{59.3}{1.7} & \msa{53.1}{4.8} & \msa{74.6}{0.4} & \msa{59.5}{3.0} \\
\rowcolor{tabband}
Full \cgr{}$^*$                       & \msa{\mathbf{60.1}}{0.4} & \msa{\mathbf{72.0}}{0.8} & \msa{\mathbf{76.4}}{0.3} & \msa{\mathbf{73.2}}{0.7} \\
\midrule
Confine only & \msa{59.8}{0.9} & \msa{66.3}{1.8} & \msa{75.6}{0.2} & \msa{66.9}{1.4} \\
\rowcolor{tabband}
Anchor only & \msa{59.2}{0.7} & \msa{66.2}{4.4} & \msa{75.6}{0.5} & \msa{70.6}{1.1} \\
SAE-subspace init only & \msa{59.0}{1.5} & \msa{50.4}{5.4} & \msa{74.4}{0.3} & \msa{59.0}{3.1} \\
\midrule
Anchor set from CIFAR-100 & $59.7$ & $69.5$ & $76.0$ & $71.5$ \\
\rowcolor{tabband}
OPLoRA $+$ anchor & \msa{59.1}{1.2} & \msa{68.4}{2.5} & \msa{75.6}{0.3} & \msa{70.2}{0.5} \\
{Hard task-projection only} & {\msa{57.5}{0.6}} & {\msa{43.8}{4.4}} & {\msa{74.1}{0.6}} & {\msa{57.5}{0.9}} \\
{Hard task-projection $+$ anchor} & {\msa{58.4}{0.3}} & {\msa{63.8}{2.1}} & {\msa{75.2}{0.5}} & {\msa{69.4}{0.2}} \\
\bottomrule\end{tabular}
\caption{{Contribution of each \cgr{} component on top of a LoRA
base.} {Ret} is CIFAR-100 zero-shot retention.}\label{tab:ablation}
\end{table}

All vision ablations use FGVC-Aircraft and CUB-200 with CIFAR-100
zero-shot retention because these are the panel's two fine-grained tasks with their most severe
forgetting and in-distribution images for the SAE (App.~\ref{app:ev}). The
language model ablation keeps the protocol of \S\ref{subsec:setup}.

\paragraph{Both channels are crucial for retention.}
Table~\ref{tab:ablation} decomposes \cgr{} on a LoRA base. Initializing in an SAE subspace without a persistent loss changes nothing~\citep{sails}. Each
term of CGR alone lifts retention, and the two compose. The \textsc{anchor}
stops the retention concepts from drifting that \textsc{confine} cannot preserve (App.~\ref{app:drift}). The soft \textsc{confine} beats OPLoRA's hard projection under the same anchor. An anchor with hard task projection outperforms hard projection only.

\paragraph{Which concepts to anchor.}
Retention drops monotonically from top-$k$ to random to last-$k$ selection
(Table~\ref{tab:concept_id}). The selected concept set also outperforms anchoring
every active concept of~\citep{saefd,cwac} and distilling the whole
representation of~\citep{ldifs} on both downstream accuracy and retention. Crucially, retention performance is not dictated merely by the total magnitude of anchored activation:
our selected set anchors only ${\sim}53\%$ of the total concept activation yet achieves the highest retention.
Anchoring the entire representation retains less, while anchoring an equivalent activation mass on lower-scoring concepts performs no better than unselective anchoring. Retention-concept drift correlates with held-out retention across every run (Figure~\ref{fig:drift_pred}, App.~\ref{app:drift_ret}).
This ordering holds consistently on the language track (Table~\ref{tab:llm_concept_id}), confirming that the SAE provides tangible value through targeted selection and interpretability (\S\ref{subsec:interp}).

\begin{table}[t]\centering\small\tabcolsep=1mm

\newcommand{\ms}[2]{$#1$}  %
\begin{tabular}{l cc cc c}\toprule
 & \multicolumn{2}{c}{FGVC} & \multicolumn{2}{c}{CUB} & \\
\cmidrule(lr){2-3}\cmidrule(lr){4-5}
Anchor set & Acc & Ret & Acc & Ret &
  \begin{tabular}[c]{@{}c@{}}Anch.\\ act.\end{tabular} \\\midrule
Top-$k$ firing (ours)        & \ms{\mathbf{60.1}}{0.4} & \ms{\mathbf{72.0}}{0.8} & \ms{\mathbf{76.4}}{0.3} & \ms{\mathbf{73.2}}{0.7} & $53.3$ \\
\rowcolor{tabband}
Random selection             & \ms{59.6}{0.3} & \ms{69.2}{1.6} & \ms{\mathbf{76.4}}{0.2} & \ms{70.6}{0.9} & $3.6$ \\
Last-$k$ (least firing)      & \ms{59.4}{1.4} & \ms{64.8}{2.1} & \ms{76.1}{0.3} & \ms{67.9}{2.1} & ${<}0.1$ \\
\rowcolor{tabband}
SAE-FD~\citep{saefd}   & \ms{59.6}{1.2} & \ms{66.6}{1.6} & \ms{75.9}{0.0} & \ms{68.6}{1.7} & $100$ \\
All off-task concepts            & \ms{60.0}{1.3} & \ms{66.4}{1.6} & \ms{75.9}{0.1} & \ms{68.6}{1.6} & $98.3$ \\
\rowcolor{tabband}
All off-task except top-$k$         & \ms{58.3}{1.9} & \ms{65.4}{2.2} & \ms{76.0}{0.1} & \ms{68.8}{2.2} & $45.1$ \\
LDIFS~\citep{ldifs}                & \ms{59.5}{1.1} & \ms{64.8}{3.1} & \ms{75.9}{0.2} & \ms{67.2}{2.7} & -- \\
\bottomrule\end{tabular}
\caption{{Concept-selection ablation on CLIP for retention on top of LoRA Base.} {Ret} is CIFAR-100 zero-shot retention. Anchored activation (\%) is the share of the total SAE activation on the task inputs that the anchored set accounts for.}
\label{tab:concept_id}
\end{table}

\begin{table}[t]\centering\small\tabcolsep=3pt

\begin{tabular}{l ccc c}\toprule
Anchor set & Trivia & NQ-O & WebQS & Avg1 \\\midrule
Top-$k$ firing (ours) & \textbf{43.5} & \textbf{12.0} & \textbf{12.4} & \textbf{22.6} \\
\rowcolor{tabband}
Random selection & 43.4 & 10.7 & 12.1 & 22.1 \\
SAE-FD~\citep{saefd} & 42.1 & 11.1 & 12.1 & 21.7 \\
\bottomrule\end{tabular}
\caption{{Concept-selection ablation on Llama-3.1-8B for retention on top of LoRA base.}
}
\label{tab:llm_concept_id}
\end{table}

\begin{figure}[t]\centering
\includegraphics[width=\columnwidth]{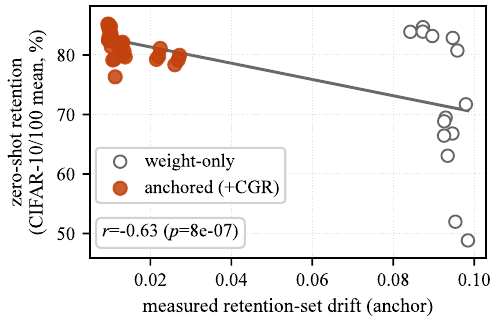}
\caption{The quantity the $\mathrm{anchor}$ reads predicts realized
retention (FGVC). Measured $\mathcal{S}_{\mathrm{ret}}$ drift vs.\ zero-shot
retention (CIFAR-10/100 mean) over all runs.}
\label{fig:drift_pred}
\end{figure}

\paragraph{Interpretability.}
\label{subsec:interp}
Because \cgr{}'s subspaces come from a frozen SAE rather than opaque singular
vectors or an activation null space, the protected retention directions are
\emph{named} and \emph{manipulable}. Figure~\ref{fig:interp} shows the
mechanism on FGVC and the weight-side complement is in
App.~\ref{app:interp_weight}. Removing the selected concepts costs
${\sim}10$pp of zero-shot retention while a size-matched random set costs
${\sim}0$pp (App.~\ref{app:ablation}). Steering them moves retention smoothly
with the intervention strength. Editing them through a concept-bottleneck
head~\citep{koh2020cbm} corrects predictions on demand. We discuss each in detail in App.~\ref{app:interp_weight}.

\begin{figure}[t]\centering
\includegraphics[width=\columnwidth]{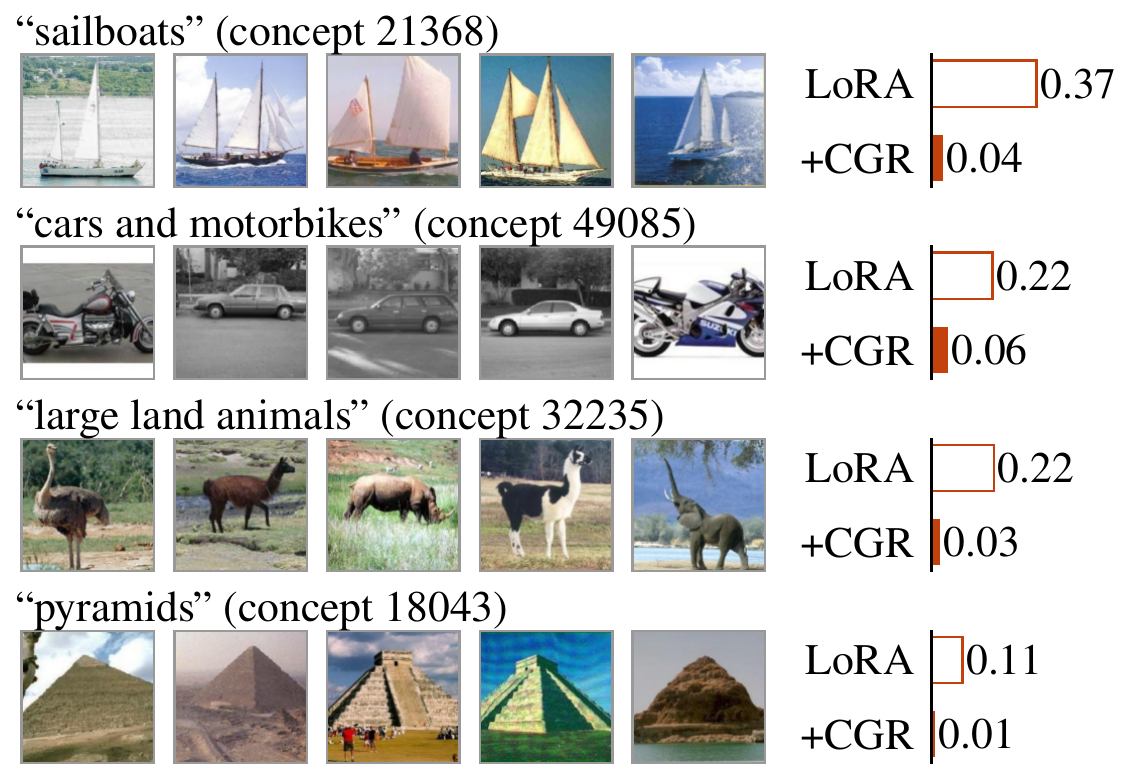}
\caption{{\cgr{} pins the concepts that drift most under plain LoRA (FGVC).} Each concept is named by its top-firing Caltech-101 exemplars. Bars show the activation drift under LoRA (gray) and under ${+}$\cgr{}
(red).}
\label{fig:interp}
\end{figure}

\section{Conclusion}
\label{sec:conclusion}

We propose \cgr{}, a plug-and-play regularization method that
uses the concepts of a frozen sparse autoencoder to preserve pre-trained
knowledge during parameter-efficient fine-tuning. A $\textsc{confine}$
term restricts the weight update to the task-concept subspace, and an
$\textsc{anchor}$ term holds a selected, disjoint set of general-purpose concept activations to the frozen model. Extensive experiments across six adapter families on a CLIP vision panel and a Llama-3.1 instruction protocol show that \cgr{} improves retention on every base at comparable task accuracy. Furthermore, each term lifts retention in different channels, and the $\textsc{anchor}$ contributes to the part that the weight-space update does not hold. Additionally, we show that \cgr{}'s preserved capability can be inspected, steered, and edited after fine-tuning. Future work includes a per-task certificate for sequential continual learning, bounds under distribution shift, and drawing both concept sets from SAEs at several layers.

\bibliography{references,aaai2026,cgr}

\clearpage
\appendix

\section{Concurrent work}\label{app:concurrent}

Two concurrent preprints overlap parts of \cgr{}'s design.
\textbf{SAE-FT}~\citep{saeft2026} penalizes the addition and removal of SAE
features for robust fine-tuning, on a single feature set in the activation
channel alone. \textbf{PRISM}~\citep{prism2026} bounds a cross-entropy risk gap
by whole-representation drift, without concept selection or a weight-space
certificate. Neither pairs the role split with a weight-space term, and neither
attaches to the retention-focused bases we evaluate.

\section{Experimental Details}\label{app:expdetails}

\subsection{Baselines}\label{app:baselines}

\textbf{LoRA}~\citep{hu2022lora} learns a low-rank update
$\Delta W{=}BA$ on frozen pretrained weights. It places no constraint on
where the update acts, so it is a retention-agnostic base.

\textbf{AdaptFormer}~\citep{chen2022adaptformer} inserts a parallel
bottleneck adapter (down-projection, nonlinearity, up-projection) beside the
MLP block. It is the one non-LoRA adapter in the panel and is likewise
retention-agnostic.

\textbf{OPLoRA}~\citep{oplora} applies a double-sided projection that
keeps the update out of the top-$k$ singular subspace of the pretrained
weight, which leaves the dominant spectral directions of the base model
untouched.

\textbf{LoRA-Null}~\citep{loranull} initializes the adapter in the
null space of calibration activations, so the update starts with no effect
on the calibration inputs.

\textbf{CorDA}~\citep{corda} builds the adapter from an
input-covariance weighted SVD of the pretrained weight. The covariance comes from general data in its knowledge-preserving mode and from task data in its instruction-preserving mode. 

\textbf{SAILS}~\citep{sails} initializes the LoRA $B$ factor from SAE
decoder directions, which starts the update in an interpretable concept
basis.

We do not include Wise-FT~\citep{wortsman2022wiseft}. It leaves fine-tuning
unconstrained and interpolates the fine-tuned weights back toward the
pre-trained ones afterward, so it composes with any of the methods we
compare and leaves the ranking between them unchanged~\citep{ldifs}.

\subsection{Datasets}\label{app:datasets}

Table~\ref{tab:datasets} lists the vision datasets with their class
counts. The panel spans fine-grained recognition,
textures, natural categories, remote sensing, medical imaging, and
structured pose and digit recognition. Eight tasks (DTD, Flowers-102,
Oxford-Pets, EuroSAT, RESISC45, Diabetic Retinopathy, dSprites-Orientation,
SVHN) come from the VTAB benchmark~\citep{zhai2019vtab} and train on $1000$
examples each under its VTAB-1k protocol. The two fine-grained tasks,
FGVC-Aircraft~\citep{maji2013fgvc} and CUB-200~\citep{wah2011cub}, train on
their full training sets. Model
selection is based on the best validation accuracy on a held-out $20\%$ split of the
training budget. Retention is measured after fine-tuning as zero-shot
accuracy on four held-out natural probes, CIFAR-10,
CIFAR-100~\citep{krizhevsky2009cifar},
Caltech-101~\citep{feifei2004caltech}, and SUN397~\citep{xiao2010sun}. No
probe is ever a fine-tuning task.

\begin{table*}[t]\centering\small\tabcolsep=4.5pt
\begin{tabular}{l cccccccccc cccc}\toprule
& \multicolumn{10}{c}{\textbf{Fine-tuning tasks}} & \multicolumn{4}{c}{\textbf{Retention probes}} \\
\cmidrule(lr){2-11}\cmidrule(lr){12-15}
& FGVC & CUB & DTD & Flow & Pets & EuroS & RESISC & DiabR & dSpr-O & SVHN & C-10 & C-100 & Calt-101 & SUN397 \\\midrule
Classes & $100$ & $200$ & $47$ & $102$ & $37$ & $10$ & $45$ & $5$ & $16$ & $10$ & $10$ & $100$ & $101$ & $397$ \\
\bottomrule\end{tabular}
\caption{{Vision task datasets.} Class counts for the ten fine-tuning
tasks and the four held-out retention probes, with column abbreviations as
in Table~\ref{tab:clip_vtab}. Training budgets are given in the text.
Probes are evaluated zero-shot only and are never fine-tuning tasks.}
\label{tab:datasets}
\end{table*}

\section{Training Details and Hyperparameters}\label{app:hparams}
\paragraph{Compute.}
Vision and language experiments run on separate machines.
Vision fine-tuning jobs use an NVIDIA RTX 2080\,Ti (11\,GB) on a shared
8-GPU node. Language jobs use an NVIDIA GH200 (96\,GB) with 16 CPU cores.
The vision node runs Linux with Python~3.12, PyTorch~2.12.1 (CUDA~13.0),
and \texttt{open\_clip\_torch}~3.3.0. The language node runs Linux (aarch64)
with Python~3.11, PyTorch~2.7.1 (CUDA~12.8), \texttt{transformers}~5.8.1,
\texttt{peft}~0.19.1, \texttt{sae-lens}~6.43.0, and \texttt{lm-eval}~0.4.9.

\paragraph{Training configuration.}
Table~\ref{tab:hparams} lists every hyperparameter for both modalities. All
adapter families share one training configuration per modality, so each
base-vs-\cgr{} comparison in the main tables varies only the \cgr{} terms. Unless marked otherwise, every vision number in this appendix is the mean over
$3$ seeds ($\pm$ sample std where shown). Language retention numbers are
$3$-seed means. Each configuration runs with
seeds $0$, $1$, and $2$.

The language track fine-tunes Llama-3.1-8B on the WizardLM
evolved-instruction corpus~\citep{xu2024wizardlm} with sequences capped at
$512$ tokens for a fixed budget of $780$ steps at an effective batch size of
$128$. Retention is zero-shot accuracy on the full test sets of
TriviaQA~\citep{joshi2017triviaqa}, NQ-Open~\citep{kwiatkowski2019nq}, and
WebQS~\citep{berant2013webquestions}. Instruction-following quality is
MTBench single-answer grading~\citep{zheng2023judging} with a fixed judge. The language model optimization configuration follows \citet{loranull}: AdamW, learning rate
$2{\times}10^{-5}$, cosine schedule with $0.03$ warmup, effective batch
$128$.

\paragraph{Coefficient selection.}
\cgr{} itself adds two loss coefficients and the two concept-set sizes. We did not tune coefficients per task. The one anchor coefficient
$\lambda_{\mathrm{anc}}$ and confine coefficient $\lambda_{\mathrm{conf}}$
were fixed from a single elbow sweep (App.~\ref{app:lambda}) and reused
everywhere. Retention is insensitive to the retention-set size over
$|\mathcal{S}_{\mathrm{ret}}|\in[32,256]$, and accuracy is likewise insensitive
to the task-set size over $|\mathcal{S}_{\mathrm{tsk}}|\in[32,512]$
(App.~\ref{app:lambda}).

\begin{table*}[t]\centering\small\tabcolsep=5pt

\begin{tabular}{l ll}\toprule
& \textbf{Vision (CLIP)} & \textbf{Language (Llama)} \\\midrule
\multicolumn{3}{l}{\emph{Backbone \& frozen SAE}} \\
Backbone            & CLIP ViT-B/32        & Llama-3.1-8B (bf16) \\
SAE site            & resid., block $8$    & resid., layer $19/32$ \\
SAE                 & Prisma ($64\times$, ReLU$+$L1) & Llama-Scope ($8\times$, top-$32$) \\
$|\mathcal{S}_{\mathrm{tsk}}|,|\mathcal{S}_{\mathrm{ret}}|$ & $128,128$ & $256,128$ \\
\midrule
\multicolumn{3}{l}{\emph{Optimization}} \\
Optimizer           & AdamW                & AdamW \\
Learning rate       & $1{\times}10^{-3}$   & $2{\times}10^{-5}$ \\
Weight decay        & $10^{-4}$            & $0$ \\
Schedule            & cosine, $5$-ep warmup & cosine, $0.03$ warmup ratio \\
Budget              & $50$ epochs          & $780$ steps \\
Batch (effective)   & $64$                 & $16{\times}8{=}128$ \\
Grad clip           & --                   & $1.0$ \\
Selection           & best val.\ acc.\ ($20\%$ split) & fixed step budget \\
Data regime         & $1000$-shot (VTAB); full (FG) & WizardLM SFT, len $512$ \\
Seeds               & $3$ ($0,1,2$)        & $3$ ($0,1,2$; MTBench: $1$) \\
\midrule
\multicolumn{3}{l}{\emph{Adapters (rank / config)}} \\
LoRA                & $r{=}8$              & $r{=}128,\alpha{=}256$ \\
OPLoRA              & $r{=}8, k{=}128$     & $r{=}128$ \\
LoRA-Null           & $r{=}8$              & $r{=}128$ (NQ-Open calib, $256$) \\
CorDA-IPM           & $r{=}8$              & $r{=}128$ \\
AdaptFormer         & bottleneck $16$      & bottleneck $64$, scale $0.1$ \\
SAILS-init          & $r{=}8, \alpha{=}0.1$ & $\alpha{=}0.1$ \\
\midrule
\multicolumn{3}{l}{\emph{\cgr{} (added terms)}} \\
Anchor $\lambda_{\mathrm{anc}}$   & $60$   & $0.2$ \\
Confine $\lambda_{\mathrm{conf}}$ & $0.1$  & $0.2$ \\
Confine mode        & inside-task          & inside-task \\
Retention set       & top firing           & top firing \\
\bottomrule\end{tabular}
\caption{{Hyperparameters.} Shared across all adapter families unless noted;
\cgr{} adds only the two loss coefficients and the concept-set sizes. The anchor coefficient
$\lambda_{\mathrm{anc}}$ differs across modalities because the anchor is a mean
squared {activation} drift, whose scale is set by the backbone's residual-stream
magnitude, not a tuned trade-off.}
\label{tab:hparams}
\end{table*}

\begin{figure*}[t]\centering
\includegraphics[width=\textwidth]{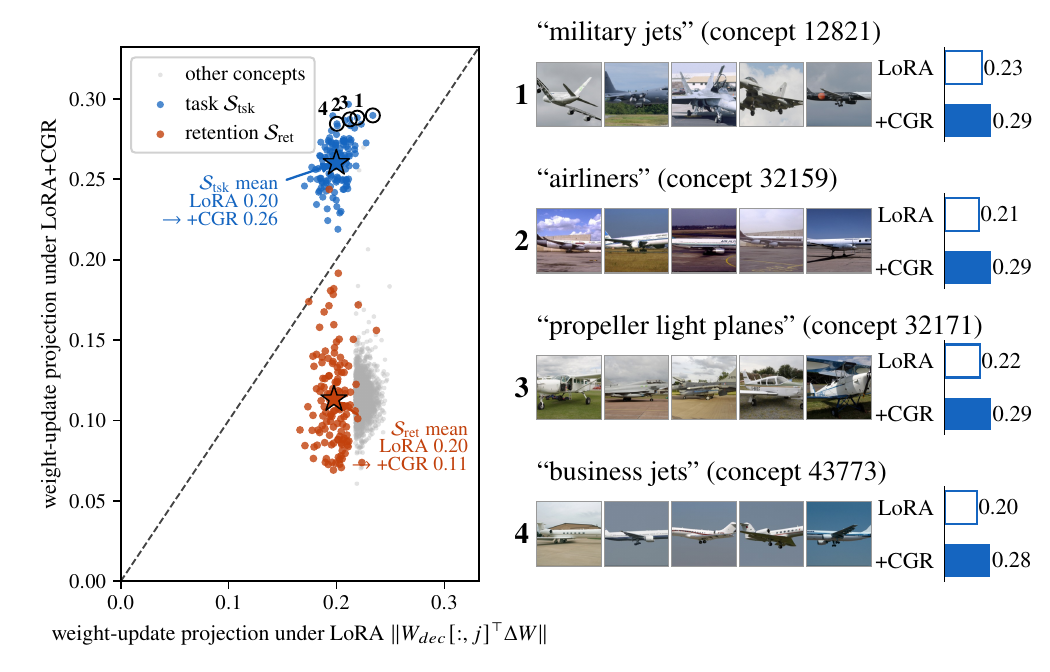}
\caption{{The weight update concentrates on the task-driven concepts
(FGVC).} {Left:} per-concept residual-write projection
$\|W_{dec}[:,j]^{\top}\Delta W\|$ under a plain LoRA adapter (horizontal)
against the same quantity under \cgr{} (vertical), on a linear scale.
{Blue marks the task set $\mathcal{S}_{\mathrm{tsk}}$, red the retention
set $\mathcal{S}_{\mathrm{ret}}$, and light gray all remaining concepts. Both
sets are computed on the frozen backbone, independently of either adapter, so
the same partition applies to both axes.} The diagonal marks equal write under
both methods. A concept above it receives more update under \cgr{} than under
LoRA, a concept below it less. LoRA alone does not separate the two sets
(both horizontal-axis means are $0.20$). \cgr{} lifts the task concepts above
the diagonal (from $0.20$ under LoRA to $0.26$ under \cgr{}) and drops the
retention concepts below it (from $0.20$ to $0.11$), so the update is
reallocated, not enlarged. {Right:} the four task concepts \cgr{}
concentrates its update on, named by their top-activating FGVC images.
{Bars are blue throughout because all four belong to the task set, hollow
under LoRA and solid under \cgr{}.}}
\label{fig:interp_task}
\end{figure*}

\section{Proofs and additional theory}\label{app:proofs}

\subsection{Proof of Proposition~\ref{prop:confine}}\label{app:pf-confine}

\begin{proof}
    At any stationary point  $\Delta W^q$ of $\mathcal{L}_{\mathrm{task}}(\Delta W)+\lambda_{\mathrm{conf}}^q\|P_{\mathrm{tsk}}^{\perp}\Delta W\|_F^2$, by first order differentiability conditions, we obtain
    \begin{align*}
        P_{\mathrm{tsk}}^{\perp}\Delta W^q =-\tfrac{1}{2\lambda_{\mathrm{conf}}^q}\nabla_{\!\Delta W}\mathcal{L}_{\mathrm{task}}(\Delta W^q).
    \end{align*}
    Applying $P_{\mathrm{tsk}}^{\perp}$ to both sides and recalling that $(P_{\mathrm{tsk}}^{\perp})^2 = P_{\mathrm{tsk}}^{\perp}$,
    \begin{align*}
        P_{\mathrm{tsk}}^{\perp}\Delta W^q &=-\tfrac{1}{2\lambda_{\mathrm{conf}}^q}P_{\mathrm{tsk}}^{\perp}\nabla_{\!\Delta W}\mathcal{L}_{\mathrm{task}}(\Delta W^q).
    \end{align*}
    Thus, taking norms, 
    \begin{equation*}
        \|P_{\mathrm{tsk}}^{\perp}\Delta W^q\|_F=O(1/\lambda_{\mathrm{conf}}) \|P_{\mathrm{tsk}}^{\perp}\nabla_{\!\Delta W}\mathcal{L}_{\mathrm{task}}(\Delta W^q)\|_F,
    \end{equation*}
    and the conclusion follows from the assumption $\|P_{\mathrm{tsk}}^{\perp}\nabla_{\!\Delta W}\mathcal{L}_{\mathrm{task}}(\Delta W^q)\|_F \leq C$.

\end{proof}

\subsection{Proof of Proposition~\ref{prop:insufficiency}}\label{app:pf-insuff}

\begin{proof}[Proof of the upper bound]
Write
\[
  q(x)=W_{\mathrm{ret}}h_0(x)+b_{\mathrm{ret}},
  \qquad
  \delta(x)=W_{\mathrm{ret}}\Delta W x.
\]
Since ReLU is coordinatewise $1$-Lipschitz,
\[
\begin{aligned}
  \big\|\Delta f_{\mathrm{ret}}(x)\big\|_2
  &=
  \big\|
    \mathrm{ReLU}\big(q(x)+\delta(x)\big)
    -
    \mathrm{ReLU}\big(q(x)\big)
  \big\|_2
  \\
  &\leq
  \big\|\delta(x)\big\|_2
  =
  \big\|W_{\mathrm{ret}}\Delta W x\big\|_2.
\end{aligned}
\]
Squaring and taking expectations gives
\[
\begin{aligned}
  \mathbb{E}_x\big\|\Delta f_{\mathrm{ret}}(x)\big\|_2^2
  &\leq
  \mathbb{E}_x\big\|W_{\mathrm{ret}}\Delta W x\big\|_2^2
  \\
  &=
  \mathrm{tr}\!\left(
    W_{\mathrm{ret}}\Delta W\Sigma
    \Delta W^\top W_{\mathrm{ret}}^\top
  \right).
\end{aligned}
\]
\end{proof}

\begin{proof}[Proof of the lower bound]
For every $x\in A$, the concepts in $\mathcal{J}$ remain in the active ReLU region, and hence their read-out change is exactly
\[
  \Delta f_{\mathrm{ret},\mathcal{J}}(x)
  =
  W_{\mathcal{J}}\Delta W x.
\]
Since the squared norm of the complete read-out change dominates the contribution of these coordinates on the subset $A$,
\[
\begin{aligned}
  \mathbb{E}_x\big\|\Delta f_{\mathrm{ret}}(x)\big\|_2^2
  &\geq
  \mathbb{E}_x\!\left[
    \mathbf{1}_A(x)
    \big\|W_{\mathcal{J}}\Delta W x\big\|_2^2
  \right]
  \\
  &=
  \mathrm{tr}\!\left(
    W_{\mathcal{J}}\Delta W
    \Sigma_A
    \Delta W^\top W_{\mathcal{J}}^\top
  \right).
\end{aligned}
\]
To see why exact projection does not control this term, suppose there exist $u$ and $v$ such that
\[
  P_{\mathrm{tsk}}^\perp u=0,
  \qquad
  W_{\mathcal{J}}u\neq 0,
  \qquad
  v^\top\Sigma_Av>0.
\]
For $\Delta W_t=tuv^\top$,
\[
  P_{\mathrm{tsk}}^\perp\Delta W_t=0,
\]
while
\[
\begin{aligned}
  \mathrm{tr}\!\left(
    W_{\mathcal{J}}\Delta W_t
    \Sigma_A
    \Delta W_t^\top W_{\mathcal{J}}^\top
  \right)
  &=
  t^2
  \big(v^\top\Sigma_Av\big)
  \big\|W_{\mathcal{J}}u\big\|_2^2.
\end{aligned}
\]
Thus, along any such admissible family for which the concepts in $\mathcal{J}$ remain active on $A$, the lower bound grows quadratically with the magnitude of the in-subspace update. Hence the projection constraint alone does not control retention-concept drift.
\end{proof}

\section{Small experimental exploration of relation between weight and activation concept space}\label{app:drift}
\begin{figure*}[t]\centering
\includegraphics[width=\textwidth]{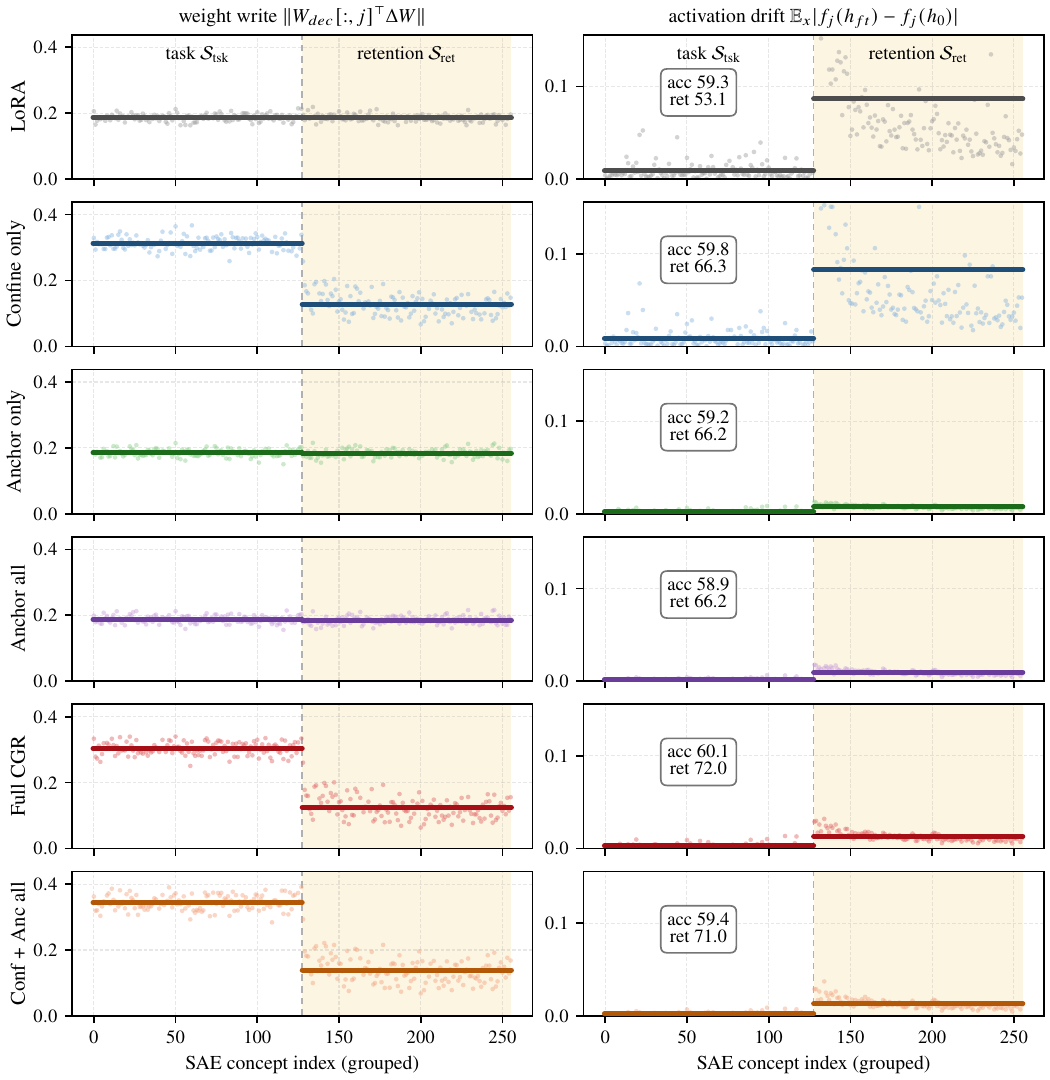}
\caption{Weight-space control and activation-space drift across task and retention concepts. Each point corresponds to one SAE concept, grouped into the task set $\mathcal{S}_{\mathrm{tsk}}$ and the retention set $\mathcal{S}_{\mathrm{ret}}$. The left column reports the weight change magnitude, $\|W_{\mathrm{dec}}[:,j]^{\top}\Delta W\|$. The right column reports the corresponding activation drift, $\mathbb{E}_{x}|f_j(h_{\mathrm{ft}}(x))-f_j(h_0(x))|$. Horizontal lines denote per-set means, and the boxes report task accuracy and CIFAR-100 zero-shot retention. \textsc{Confine} concentrates the weight update on the task concepts and suppresses its projection onto the retention concepts, but does not by itself prevent retention-concept activation drift. \textsc{Anchor} directly suppresses activation drift, including when applied only to the selected retention set. Combining the two terms in full CGR separates their roles: task concepts remain free to support adaptation, while the retention concepts constrain the resulting representation change. Anchoring all concepts instead reduces this flexibility and yields lower task accuracy than selectively anchoring $\mathcal{S}_{\mathrm{ret}}$. Results are means over three seeds; off-scale retention-concept drifts are clipped.}
\label{fig:drift_sixarm}
\end{figure*}

We want to compare the change in weight direction (in the residual stream) for each concept to their respecting concept drift in the SAE. To do that we will compare two quantities for each concept $j$, first the total weight change $\|W_{dec[:,j]}^T \Delta W\|$ and second the concept drift $\mathbb{E}_x|f_j(h_{ft})-f_j(h_0)|$. 

In Figure~\ref{fig:drift_sixarm}, we divide the concepts into task and retention and plot the weight change and concept drift for several strategies. We do that for the LoRA baseline and then add five other options: Confine only(in task set), Anchor only(in ret set), Anchor all(task and ret), CGR(Anchor ret and confine task), and Anchor all(task and ret sets) and Confine(task set). 
In terms of task accuracy and retention, the full CGR method is the best option. However, the interesting part is how different strategies control the weights and concept drifts. 

The Anchor term seems to reduce the concept drift not only in the portion it is applied to but even when applied only to the retention concepts; it helps reduce the drift in the task concepts as well. The confine term seems to always accelerate the change in weights on the task set and reduce the weight change in the retention set; nevertheless, it is unable to control the concept drift in retention. This indicates that both terms are needed. Moreover, anchoring every concept seems to be working against the new task accuracy when compared with only anchoring the retention set, both with or without confine. In an informal way, one could think that the retention concepts are not only helping in retention but also act as sanity checks; the machine changes the internal representations in weight space for the new task, but every time it does it has to pass these checks. This, at the same time, keeps the main concept drift low. For example, if we want to improve our understanding of an elephant, we can change our representation of an elephant, but it still has to be an animal and have four legs. 

\begin{figure*}[t]\centering
\includegraphics[width=\textwidth]{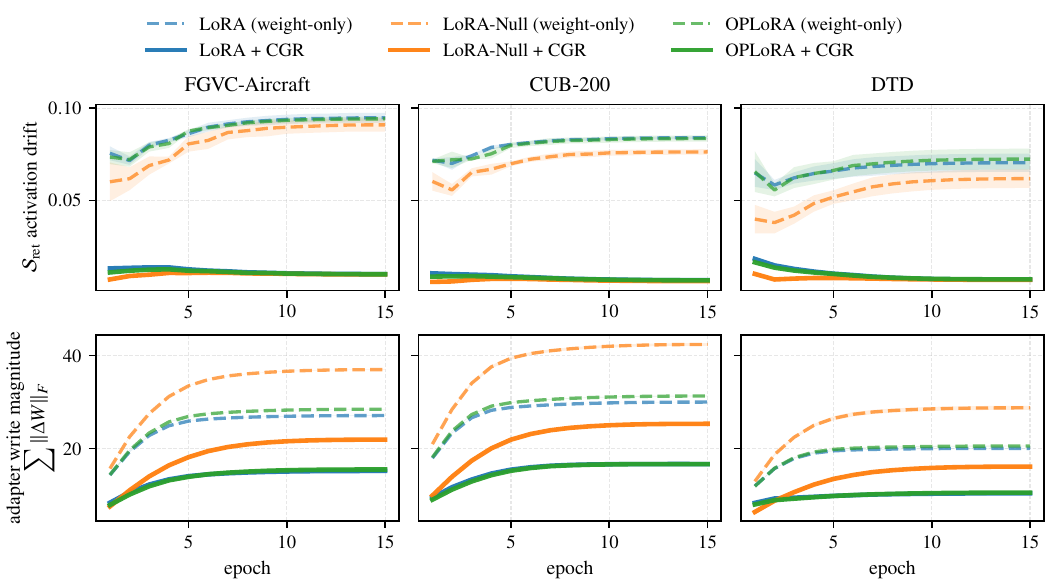}
\caption{{Drift over training: full \cgr{} pins the activation channel
and confines the weight write on all three datasets.} Columns: FGVC-Aircraft,
CUB-200, DTD ($3$ bases $\times$ weight-only dashed / $+\cgr$ solid,
mean$\pm$std over $3$ seeds, rows share y-limits). Top row: every weight-only
method's retention-concept activation drift climbs (to ${\sim}0.06$--$0.09$)
while all $+\cgr$ arms stay pinned at ${\sim}0.01$. Bottom row: the
\textsc{confine} term also bounds the adapter write magnitude on every base
(e.g.\ LoRA-Null on FGVC $37\!\to\!22$), so the drift gap is not explained by
larger weight moves.}\label{fig:drift_curve}
\end{figure*}

\iffalse
\begin{table}[t]\centering\small\tabcolsep=5pt

\begin{tabular}{l cc cc}\toprule
Base & drift WO & drift $+\cgr$ & write-mag WO$\to+\cgr$ & perp-frac \\\midrule
LoRA       & 0.095 & 0.010 & $27.1\to15.2$ & 0.089 \\
OPLoRA     & 0.094 & 0.010 & $28.5\to15.5$ & 0.054 \\
LoRA-Null  & 0.091 & 0.010 & $37.0\to21.9$ & 0.040 \\
\bottomrule\end{tabular}
\caption{\textbf{\cgr{} on every base (FGVC).}
$+\cgr$ (anchor $+$ confine) cuts the retention-concept activation drift
$\overline{D_j}_{\,j\in\mathcal{S}_{ret}}$ to ${\sim}0.01$ \emph{and} confines the
weight write on every base: the total adapter write magnitude roughly halves and
the perp-fraction (the share of $\Delta W$ \emph{outside} the task-concept
subspace) drops below $0.09$ ($>91\%$ inside). The two terms act on different
quantities (the \textsc{anchor} term pins the activation drift; the \textsc{confine}
term confines the weight write), and \cgr{} engages both on all
bases.}\label{tab:app_drift}
\end{table}
\fi

\section{Activation drift correlates with knowledge retention}\label{app:drift_ret}

The anchor's value is the retention concepts' expected squared drift, so it is worth checking that this drift moves with retention itself. It does on all three drift-instrumented datasets. Figure~\ref{fig:drift_pred} in the main text plots the FGVC relation, where retention-concept activation drift correlates with zero-shot retention at $r{=}{-}0.63$ ($p{=}8{\times}10^{-7}$) over every run logging both quantities. Figure~\ref{fig:drift_ret} replicates it on CUB-200 ($r{=}{-}0.70$, $p{=}2{\times}10^{-7}$) and DTD ($r{=}{-}0.87$, $p{=}3{\times}10^{-8}$). Retention is the CIFAR-10/100 mean, the two Table~\ref{tab:clip_vtab} probes every run logs. In each panel, the anchored runs sit at low drift and high retention, the weight-only runs at the opposite extreme. That drift is harmful is therefore an empirical finding rather than an assumption.

\begin{figure}[t]\centering
\includegraphics[width=0.9\columnwidth]{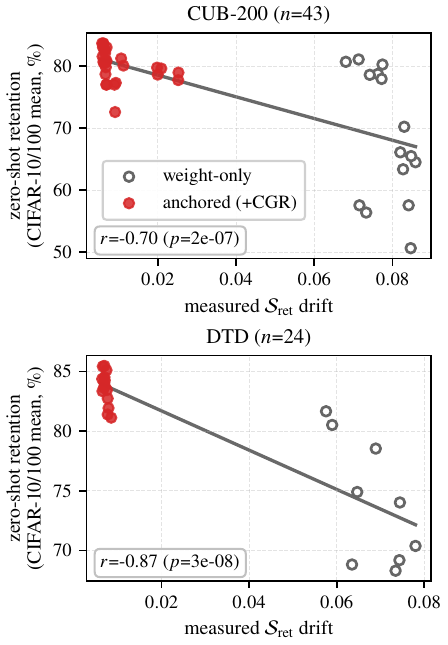}
\caption{{Drift predicts forgetting: replication (CUB-200, DTD).}
Per-run measured $\mathcal{S}_{\mathrm{ret}}$ activation drift vs.\
zero-shot retention (CIFAR-10/100 mean), for every drift-instrumented run per
dataset. Anchored runs are red, weight-only runs are open. The negative
relation from Fig.~\ref{fig:drift_pred} holds on both datasets.}\label{fig:drift_ret}
\end{figure}

\section{Causal validation of the selection}\label{app:ablation}

The previous section shows the selected concepts correlate with retention. Here we test whether they carry it. In each trained model, we zero the retention concepts $\mathcal{S}_{\mathrm{ret}}$ at the read-out layer and re-measure zero-shot retention. The control repeats the deletion on a size-matched random set, drawn only from concepts that fire on the evaluation data, since removing inactive dictionary entries would leave retention unchanged by construction. Removing the selected set destroys a large share of retained zero-shot accuracy, while removing the random set leaves retention unchanged (Fig.~\ref{fig:ablation}). A paired Wilcoxon signed-rank test over all FGVC run-probe pairs separates the two conditions at $p=1.7\times10^{-13}$, and the same gap holds on CUB-200 and DTD. Retained capability therefore flows through the concepts the \textsc{anchor} holds, so the drift it measures belongs to the mechanism rather than to a correlated quantity.

\begin{figure}[t]\centering
\includegraphics[width=0.9\columnwidth]{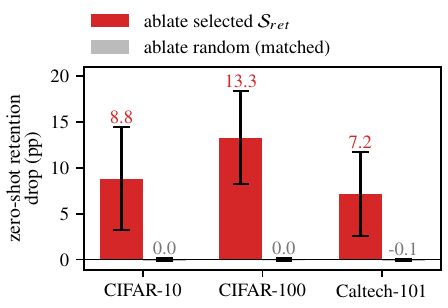}
\caption{{Selected concepts are causally load-bearing.} In each trained
model we zero the $|\mathcal{S}_{ret}|=128$ selected retention concepts at
layer~8, subtracting their SAE contribution from the residual stream with no
retraining, and re-measure zero-shot retention. The control applies the same
surgery to $128$ random alive concepts. Bars show retention drop per natural probe
(mean$\pm$std over $24$ FGVC runs, with the intervention instrumenting three of the
four retention probes and SUN397 excluded). The random set is inert. Removing
$\mathcal{S}_{ret}$ destroys natural-image retention.}\label{fig:ablation}
\end{figure}

\section{Named weight-channel concepts}\label{app:interp_weight}
Figure~\ref{fig:interp} in the main text shows which concepts \cgr{} pins in the activation channel. Figure~\ref{fig:interp_task} is its weight-channel counterpart, answering where the update goes by measurement rather than by the interventions of App.~\ref{app:ablation}. For every concept, it plots the weight write $|\Delta W \cdot W_{dec}^{(j)}|$ under a plain LoRA adapter against the same quantity under \cgr{}. Plain LoRA writes to task and retention concepts with equal weight, leaving both sets near the diagonal at a mean of $0.20$. \cgr{} lifts the task concepts to $0.26$ and drops the retention concepts to $0.11$, so the update is reallocated rather than enlarged. The right panel names the four task concepts carrying the most update by their top-activating images. All four are coherent aircraft types (military jets, airliners, propeller light planes, business jets), the fine-grained structure the classifier must separate. The update lands on the task concepts while the retention concepts hold their activations.

\paragraph{Editing the concepts at test time}
The editing test of \S\ref{subsec:interp} asks whether the selected concepts work as an interface and not only as a penalty. A concept bottleneck classifies through an interpretable concept layer, so individual concepts can be corrected at test time. We train a logistic-regression head directly on the frozen model's $256$ selected concept activations, following the concept-bottleneck configuration of \citet{koh2020cbm}. The head has no trainable projection, so no recombination can wash out the correspondence between a coordinate and its named concept. The control trains a matching head on a size-matched random set, drawn from concepts that fire on the evaluation data as in \S\ref{app:ablation}. Figure~\ref{fig:cgr_cbm_edit} shows both arms on all three drift-instrumented datasets. Before any editing, the selected head reaches $10.7\%$ on FGVC-Aircraft, $7.3\%$ on CUB-200 and $40.5\%$ on DTD, against $1.5\%$, $1.3\%$ and $8.7\%$ for the control, four to seven times the control on every dataset. The selected concepts therefore carry class-relevant structure that an equally large random slice of the dictionary does not. At test time we intervene on a fraction of the selected concepts, replacing each with its ground-truth class-conditional mean, and re-predict, the standard evaluation protocol for bottleneck models~\citep{koh2020cbm}. We intervene on both heads identically, so the contrast isolates the concept set rather than the information the intervention supplies. Accuracy rises steadily as the edited fraction grows, reaching $43.9\%$, $37.2\%$ and $97.9\%$ under full correction, and the selected head leads the control at every fraction on every dataset. Writing correct values into \emph{named} concepts repairs wrong predictions, which is the intervenability property a concept bottleneck is meant to provide.

\begin{table*}[t]\centering\small\tabcolsep=5pt

\definecolor{cgrband}{gray}{0.90}
\begin{tabular}{l cccc c}\toprule
Method & CIFAR-10 & CIFAR-100 & Caltech-101 & SUN397 & {Avg} \\\midrule
Zero-shot (no FT) & 93.8 & 78.2 & 90.9 & 67.6 & 82.6 \\
\midrule
LoRA & $84.3$\,{$\pm4.1$} & $63.1$\,{$\pm5.8$} & $87.3$\,{$\pm2.5$} & $61.6$\,{$\pm2.7$} & $74.1$\,{$\pm3.5$} \\
\rowcolor{cgrband}
\quad ${+}$\cgr{}$^*$ & $91.4$\,{$\pm1.8$} & $74.0$\,{$\pm2.2$} & $89.5$\,{$\pm1.9$} & $66.0$\,{$\pm1.1$} & $80.2$\,{$\pm1.5$} \\
\addlinespace
OPLoRA & $84.5$\,{$\pm3.4$} & $63.3$\,{$\pm5.5$} & $87.6$\,{$\pm2.3$} & $61.9$\,{$\pm2.7$} & $74.4$\,{$\pm3.3$} \\
\rowcolor{cgrband}
\quad ${+}$\cgr{}$^*$ & $92.0$\,{$\pm1.2$} & $75.2$\,{$\pm1.4$} & $90.0$\,{$\pm1.2$} & $66.2$\,{$\pm1.2$} & $80.8$\,{$\pm1.1$} \\
\addlinespace
LoRA-Null & $89.9$\,{$\pm2.5$} & $72.5$\,{$\pm4.6$} & $89.6$\,{$\pm1.9$} & $66.5$\,{$\pm1.3$} & $79.6$\,{$\pm2.0$} \\
\rowcolor{cgrband}
\quad ${+}$\cgr{}$^*$ & $\mathbf{92.6}$\,{$\pm1.3$} & $76.8$\,{$\pm1.0$} & $\mathbf{90.2}$\,{$\pm1.7$} & $\mathbf{67.5}$\,{$\pm0.5$} & $\mathbf{81.7}$\,{$\pm0.7$} \\
\addlinespace
CorDA & $87.0$\,{$\pm5.0$} & $67.9$\,{$\pm6.9$} & $88.2$\,{$\pm3.1$} & $63.4$\,{$\pm4.7$} & $76.6$\,{$\pm4.6$} \\
\rowcolor{cgrband}
\quad ${+}$\cgr{}$^*$ & $91.4$\,{$\pm2.4$} & $73.2$\,{$\pm3.1$} & $89.4$\,{$\pm2.1$} & $65.3$\,{$\pm2.9$} & $79.8$\,{$\pm2.5$} \\
\addlinespace
AdaptFormer & $89.4$\,{$\pm3.2$} & $72.5$\,{$\pm2.7$} & $88.2$\,{$\pm3.0$} & $64.4$\,{$\pm2.2$} & $78.6$\,{$\pm1.9$} \\
\rowcolor{cgrband}
\quad ${+}$\cgr{}$^*$ & $92.2$\,{$\pm1.9$} & $\mathbf{77.0}$\,{$\pm0.6$} & $89.8$\,{$\pm2.3$} & $66.7$\,{$\pm0.8$} & $81.4$\,{$\pm1.0$} \\
\addlinespace
SAILS-init & $82.6$\,{$\pm4.3$} & $60.8$\,{$\pm5.6$} & $86.9$\,{$\pm2.8$} & $60.6$\,{$\pm2.7$} & $72.7$\,{$\pm3.6$} \\
\rowcolor{cgrband}
\quad ${+}$\cgr{}$^*$ & $91.0$\,{$\pm1.9$} & $73.5$\,{$\pm2.0$} & $89.4$\,{$\pm1.6$} & $65.4$\,{$\pm1.3$} & $79.8$\,{$\pm1.4$} \\
\bottomrule\end{tabular}
\caption{{Per-probe zero-shot retention (\%).} The four natural probes that make up the {Ret} column of Table~\ref{tab:clip_vtab}, each shown separately; every cell is meaned over the same $10$ fine-tuning tasks and $3$ seeds, so each row's {Avg} matches its Avg retention in Table~\ref{tab:clip_vtab}. Each cell shows mean${\pm}$std, the spread {across the $10$ tasks} (as in Table~\ref{tab:clip_vtab}; seed noise ${\le}0.9$); \cgr{} tightens it on every probe. Best per column {bold}; $^*\!=\!$ours.}
\label{tab:probe_retention}
\end{table*}

\begin{figure}[t]\centering
\includegraphics[width=0.9\columnwidth]{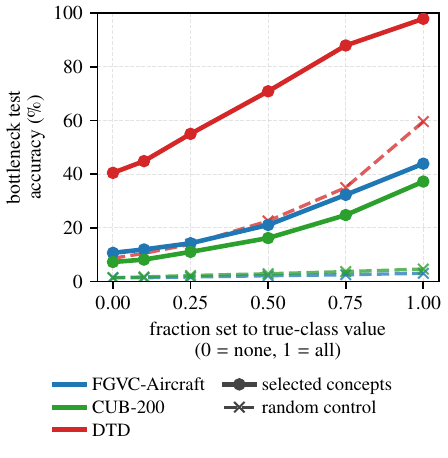}
\caption{The selected head leads the control at every level. Both heads see only concept values, so these accuracies are not comparable to the fine-tuned models of Table~\ref{tab:clip_vtab}.}
\label{fig:cgr_cbm_edit}
\end{figure}

Steering generalizes the removal test to a continuous scale factor
(Fig.~\ref{fig:steer}). On the frozen backbone, we scale the selected concepts'
contribution at the read-out layer by $\alpha$, so $\alpha{=}1$ is the
unmodified model and $\alpha{=}0$ is the removal test above, and we re-measure
zero-shot retention. Retention follows $\alpha$ smoothly and peaks at the
unmodified model. It falls monotonically toward removal, from $86.0\%$ at
$\alpha{=}1$ to $80.5\%$ at $\alpha{=}0$, and it falls again when the
contribution is amplified, reaching $85.0\%$ at $\alpha{=}1.5$. The
size-matched random control varies by $0.2$pp across the whole range.
Retention responds to the selected concepts as a dial rather than a switch,
and it responds only to them.

\begin{figure}[t]\centering
\includegraphics[width=0.9\columnwidth]{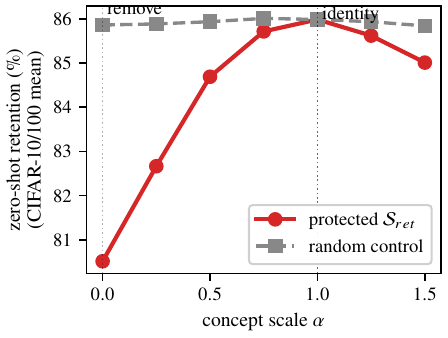}
\caption{{Steering the selected concepts controls retention (FGVC).}
On the frozen backbone we scale the selected $\mathcal{S}_{\mathrm{ret}}$
concepts' layer-8 contribution by $\alpha$ (red). Retention peaks at the
unmodified model ($\alpha{=}1$) and falls on both sides, toward removal at
$\alpha{=}0$ and under amplification at $\alpha{=}1.5$. A size-matched random
control is flat (gray). Retention is averaged over the CIFAR-10 and CIFAR-100
probes.}
\label{fig:steer}
\end{figure}

\section{Detailed vision results}\label{app:probes}\label{app:fullpanel}

In this section, we show a detailed breakdown of the results in Table~\ref{tab:clip_vtab}, both over the same {$10$} fine-tuning tasks and $3$ seeds. Table~\ref{tab:probe_retention} shows its
four-probe retention column (\S\ref{subsec:clip}) into the individual probes.
Table~\ref{tab:clip_vtab_full} reports every cell as mean${\pm}$std over the
seeds, with average accuracy and average retention across the ten tasks.

\begin{sidewaystable*}\centering\small\tabcolsep=1.5pt
\definecolor{cgrband}{gray}{0.90}
\begin{tabular}{l cc cc cc cc cc}\toprule
& \multicolumn{2}{c}{FGVC} & \multicolumn{2}{c}{CUB} & \multicolumn{2}{c}{DTD} & \multicolumn{2}{c}{Flow} & \multicolumn{2}{c}{Pets} \\
\cmidrule(lr){2-3}\cmidrule(lr){4-5}\cmidrule(lr){6-7}\cmidrule(lr){8-9}\cmidrule(lr){10-11}
Method & Acc & Ret & Acc & Ret & Acc & Ret & Acc & Ret & Acc & Ret \\\midrule
Zero-shot
  & 19.3 & 82.6 & 70.9 & 82.6 & 54.6 & 82.6 & 73.0 & 82.6 & 89.8 & 82.6 \\
\midrule
LoRA
  & $59.3{\pm}1.7$ & $67.9{\pm}1.7$ & $74.6{\pm}0.4$ & $70.1{\pm}1.1$ & $71.5{\pm}1.7$ & $74.1{\pm}0.1$ & $93.2{\pm}0.3$ & $76.3{\pm}1.8$ & $82.6{\pm}0.8$ & $73.8{\pm}2.2$ \\
\rowcolor{cgrband}
\;${+}$\cgr{}
  & $60.1{\pm}0.4$ & $78.0{\pm}0.7$ & $76.4{\pm}0.3$ & $78.7{\pm}0.4$ & $74.3{\pm}0.7$ & $80.5{\pm}0.5$ & $95.0{\pm}0.5$ & $81.6{\pm}0.4$ & $86.1{\pm}0.5$ & $80.8{\pm}0.1$ \\
OPLoRA
  & $58.9{\pm}0.4$ & $69.9{\pm}1.1$ & $75.1{\pm}0.2$ & $69.3{\pm}1.6$ & $72.1{\pm}0.5$ & $72.1{\pm}1.1$ & $93.4{\pm}0.4$ & $76.9{\pm}1.7$ & $81.5{\pm}2.4$ & $75.9{\pm}2.4$ \\
\rowcolor{cgrband}
\;${+}$\cgr{}
  & $59.5{\pm}0.3$ & $79.2{\pm}0.3$ & $76.6{\pm}0.1$ & $79.2{\pm}0.7$ & $74.6{\pm}1.0$ & $80.5{\pm}0.2$ & $95.4{\pm}0.4$ & $81.6{\pm}0.5$ & $85.7{\pm}1.4$ & $81.1{\pm}0.3$ \\
LoRA-Null
  & $61.0{\pm}0.3$ & $78.8{\pm}0.5$ & $76.2{\pm}0.7$ & $78.0{\pm}1.3$ & $73.9{\pm}1.2$ & $76.2{\pm}1.5$ & $94.8{\pm}0.2$ & $81.5{\pm}0.2$ & $86.5{\pm}1.0$ & $80.6{\pm}0.8$ \\
\rowcolor{cgrband}
\;${+}$\cgr{}
  & $60.5{\pm}0.2$ & $\mathbf{80.5}{\pm}0.3$ & $76.9{\pm}0.5$ & $\mathbf{80.5}{\pm}0.3$ & $76.7{\pm}0.6$ & $\mathbf{81.7}{\pm}0.1$ & $95.9{\pm}0.3$ & $\mathbf{82.6}{\pm}0.1$ & $87.1{\pm}1.1$ & $\mathbf{82.0}{\pm}0.2$ \\
CorDA
  & $57.8{\pm}1.1$ & $71.7{\pm}0.6$ & $72.5{\pm}0.9$ & $65.9{\pm}1.0$ & $73.3{\pm}0.7$ & $78.8{\pm}0.6$ & $93.9{\pm}0.4$ & $80.4{\pm}0.4$ & $84.4{\pm}0.8$ & $78.8{\pm}1.6$ \\
\rowcolor{cgrband}
\;${+}$\cgr{}
  & $57.1{\pm}1.1$ & $76.3{\pm}0.5$ & $73.0{\pm}0.6$ & $74.4{\pm}0.9$ & $73.9{\pm}0.5$ & $81.1{\pm}0.5$ & $94.4{\pm}0.6$ & $81.8{\pm}0.2$ & $85.5{\pm}0.9$ & $80.5{\pm}0.4$ \\
AF
  & $60.9{\pm}1.2$ & $77.1{\pm}1.2$ & $77.1{\pm}0.6$ & $75.9{\pm}0.7$ & $74.7{\pm}1.2$ & $76.0{\pm}0.9$ & $95.5{\pm}0.2$ & $80.8{\pm}0.4$ & $87.4{\pm}0.5$ & $77.4{\pm}0.7$ \\
\rowcolor{cgrband}
\;${+}$\cgr{}
  & $61.2{\pm}0.4$ & $79.5{\pm}0.2$ & $77.7{\pm}0.2$ & $79.8{\pm}0.0$ & $76.9{\pm}0.1$ & $81.3{\pm}0.4$ & $96.3{\pm}0.1$ & $82.1{\pm}0.0$ & $88.1{\pm}1.0$ & $81.5{\pm}0.1$ \\
SAILS
  & $59.0{\pm}1.5$ & $65.8{\pm}2.8$ & $74.4{\pm}0.3$ & $69.7{\pm}1.6$ & $71.0{\pm}0.3$ & $72.4{\pm}0.7$ & $92.9{\pm}0.4$ & $74.6{\pm}0.7$ & $82.7{\pm}0.8$ & $73.0{\pm}2.3$ \\
\rowcolor{cgrband}
\;${+}$\cgr{}
  & $59.0{\pm}1.1$ & $77.9{\pm}0.8$ & $76.3{\pm}0.5$ & $78.3{\pm}0.5$ & $74.3{\pm}0.3$ & $79.7{\pm}0.5$ & $95.2{\pm}0.5$ & $81.3{\pm}0.1$ & $84.5{\pm}1.6$ & $80.3{\pm}0.5$ \\
\bottomrule\end{tabular}
\par\vspace{5pt}
\begin{tabular}{l cc cc cc cc cc | cc}\toprule
& \multicolumn{2}{c}{EuroS} & \multicolumn{2}{c}{RESISC} & \multicolumn{2}{c}{DiabR} & \multicolumn{2}{c}{dSpr-O} & \multicolumn{2}{c}{SVHN} & \multicolumn{2}{c}{\textbf{Avg}} \\
\cmidrule(lr){2-3}\cmidrule(lr){4-5}\cmidrule(lr){6-7}\cmidrule(lr){8-9}\cmidrule(lr){10-11}\cmidrule(lr){12-13}
Method & Acc & Ret & Acc & Ret & Acc & Ret & Acc & Ret & Acc & Ret & Acc & Ret \\\midrule
Zero-shot
  & 40.5 & 82.6 & 61.9 & 82.6 & 35.4 & 82.6 & 9.3 & 82.6 & 60.4 & 82.6 & $51.5{\pm}25.2$ & 82.6 \\
\midrule
LoRA
  & $93.6{\pm}1.5$ & $78.3{\pm}1.0$ & $83.0{\pm}0.3$ & $76.7{\pm}0.4$ & $72.3{\pm}1.4$ & $78.6{\pm}5.0$ & $43.6{\pm}1.6$ & $70.9{\pm}4.4$ & $89.4{\pm}1.1$ & $73.8{\pm}0.9$ & $76.3{\pm}15.8$ & $74.1{\pm}3.5$ \\
\rowcolor{cgrband}
\;${+}$\cgr{}
  & $93.4{\pm}0.8$ & $81.2{\pm}0.4$ & $84.0{\pm}0.5$ & $81.8{\pm}0.3$ & $72.9{\pm}0.6$ & $81.2{\pm}1.4$ & $42.9{\pm}1.7$ & $77.9{\pm}0.6$ & $85.8{\pm}1.0$ & $80.6{\pm}0.7$ & $77.1{\pm}15.9$ & $80.2{\pm}1.5$$^{\star}$ \\
OPLoRA
  & $95.1{\pm}1.7$ & $78.9{\pm}1.3$ & $82.7{\pm}1.4$ & $77.3{\pm}1.1$ & $72.6{\pm}0.2$ & $76.4{\pm}6.5$ & $44.2{\pm}1.6$ & $72.2{\pm}2.2$ & $89.8{\pm}1.0$ & $74.6{\pm}0.9$ & $76.5{\pm}15.8$ & $74.4{\pm}3.3$ \\
\rowcolor{cgrband}
\;${+}$\cgr{}
  & $93.4{\pm}0.5$ & $81.4{\pm}0.6$ & $84.3{\pm}1.2$ & $82.1{\pm}0.2$ & $73.3{\pm}0.8$ & $81.7{\pm}0.1$ & $43.4{\pm}0.8$ & $80.0{\pm}0.3$ & $86.3{\pm}0.7$ & $81.5{\pm}0.1$ & $77.2{\pm}15.9$ & $80.8{\pm}1.1$$^{\star}$ \\
LoRA-Null
  & $95.6{\pm}0.7$ & $81.2{\pm}0.4$ & $85.0{\pm}1.2$ & $80.6{\pm}0.2$ & $72.5{\pm}0.6$ & $81.8{\pm}1.4$ & $44.6{\pm}2.0$ & $81.0{\pm}0.7$ & $91.3{\pm}0.7$ & $76.7{\pm}1.6$ & $78.1{\pm}16.1$ & $79.6{\pm}2.0$ \\
\rowcolor{cgrband}
\;${+}$\cgr{}
  & $93.3{\pm}0.2$ & $\mathbf{82.3}{\pm}0.2$ & $85.0{\pm}0.1$ & $\mathbf{82.5}{\pm}0.3$ & $73.5{\pm}0.6$ & $82.0{\pm}0.4$ & $43.9{\pm}2.0$ & $\mathbf{81.7}{\pm}0.2$ & $87.9{\pm}0.3$ & $81.7{\pm}0.2$ & $78.1{\pm}15.9$ & $\mathbf{81.7}{\pm}0.7$$^{\star}$ \\
CorDA
  & $94.5{\pm}1.1$ & $79.1{\pm}1.0$ & $84.6{\pm}0.5$ & $78.9{\pm}0.5$ & $73.2{\pm}0.5$ & $79.5{\pm}3.1$ & $43.1{\pm}2.9$ & $78.8{\pm}0.8$ & $90.1{\pm}1.0$ & $74.4{\pm}2.4$ & $76.7{\pm}16.5$ & $76.6{\pm}4.6$ \\
\rowcolor{cgrband}
\;${+}$\cgr{}
  & $94.3{\pm}0.4$ & $81.5{\pm}0.2$ & $84.4{\pm}1.0$ & $81.3{\pm}0.1$ & $72.3{\pm}0.9$ & $81.1{\pm}0.9$ & $41.9{\pm}0.4$ & $79.9{\pm}0.6$ & $87.4{\pm}0.9$ & $80.4{\pm}0.4$ & $76.4{\pm}16.7$ & $79.8{\pm}2.5$$^{\star}$ \\
AF
  & $95.5{\pm}0.5$ & $80.4{\pm}0.4$ & $85.7{\pm}0.8$ & $80.0{\pm}0.3$ & $72.8{\pm}0.8$ & $80.5{\pm}0.6$ & $45.1{\pm}1.9$ & $79.8{\pm}0.1$ & $90.7{\pm}0.1$ & $78.1{\pm}1.6$ & $78.5{\pm}16.1$ & $78.6{\pm}1.9$ \\
\rowcolor{cgrband}
\;${+}$\cgr{}
  & $94.4{\pm}0.9$ & $81.9{\pm}0.2$ & $85.9{\pm}0.3$ & $82.3{\pm}0.1$ & $73.5{\pm}0.6$ & $\mathbf{82.5}{\pm}0.2$ & $44.8{\pm}1.3$ & $81.4{\pm}0.3$ & $86.6{\pm}0.8$ & $\mathbf{82.0}{\pm}0.1$ & $78.5{\pm}15.8$ & $81.4{\pm}1.0$$^{\star}$ \\
SAILS
  & $94.5{\pm}2.0$ & $76.2{\pm}1.9$ & $83.2{\pm}1.7$ & $75.2{\pm}1.1$ & $72.6{\pm}0.8$ & $77.7{\pm}6.5$ & $43.8{\pm}2.3$ & $69.0{\pm}0.4$ & $90.0{\pm}0.9$ & $73.7{\pm}2.5$ & $76.4{\pm}15.9$ & $72.7{\pm}3.6$ \\
\rowcolor{cgrband}
\;${+}$\cgr{}
  & $93.1{\pm}0.8$ & $81.2{\pm}0.3$ & $84.0{\pm}1.5$ & $81.4{\pm}0.3$ & $72.1{\pm}1.6$ & $80.0{\pm}0.3$ & $42.0{\pm}2.0$ & $77.7{\pm}0.9$ & $86.0{\pm}0.7$ & $80.7{\pm}0.4$ & $76.7{\pm}16.2$ & $79.8{\pm}1.4$$^{\star}$ \\
\bottomrule\end{tabular}
\caption{{{Complete per-task results of \cgr{} on the ten CLIP ViT-B/32 transfer tasks (extended version of Table~\ref{tab:clip_vtab}).} Every cell is mean${\pm}$std over $3$ seeds. {Acc} is target-task accuracy; {Ret} is the four-probe zero-shot retention of Table~\ref{tab:clip_vtab}. \textbf{Avg} is the mean over the same $10$ tasks (${\pm}$std across tasks), so the Avg Ret column matches Table~\ref{tab:clip_vtab}. Best retention per task and best Avg Ret in {bold}; the star ($^{\star}$) marks a significant across-task \cgr{} retention gain (paired Wilcoxon, $p{<}0.05$).}}
\label{tab:clip_vtab_full}
\end{sidewaystable*}

\section{Detailed language results}\label{app:llmstd}
Table~\ref{tab:llm_instruct_std} reports the seed variability behind
Table~\ref{tab:llm_instruct}. Every retention column is the mean and sample
standard deviation over three seeds. The \cgr{} retention gains clear the
seed spread on every backbone.
\begin{table*}[t]\centering\small\tabcolsep=5pt
\definecolor{cgrband}{gray}{0.90}
\begin{tabular}{l ccccc}\toprule
 & \multicolumn{5}{c}{World-knowledge retention (\%)} \\
\cmidrule(lr){2-6}
Method & Trivia & NQ-O & WebQS & Avg1 & Per \\\midrule
Zero-shot & $61.5$ & $7.9$ & $11.6$ & $27.0$ & $100.0$ \\
\midrule
LoRA & $27.6$\,{$\pm1.2$} & $11.1$\,{$\pm0.6$} & $12.4$\,{$\pm0.7$} & $17.1$\,{$\pm0.7$} & $81.6$\,{$\pm0.7$} \\
\rowcolor{cgrband}
\quad +\cgr{} & $43.5$\,{$\pm2.2$} & $12.0$\,{$\pm0.8$} & $12.4$\,{$\pm0.7$} & $22.6$\,{$\pm0.4$} & $90.2$\,{$\pm1.1$} \\
LoRA-Null & $47.1$\,{$\pm0.3$} & $9.5$\,{$\pm0.3$} & $9.8$\,{$\pm0.1$} & $22.1$\,{$\pm0.1$} & $87.1$\,{$\pm0.3$} \\
\rowcolor{cgrband}
\quad +\cgr{} & $55.9$\,{$\pm0.3$} & $9.1$\,{$\pm0.2$} & $9.7$\,{$\pm0.2$} & $24.9$\,{$\pm0.1$} & $91.5$\,{$\pm0.8$} \\
CorDA & $40.5$\,{$\pm2.7$} & $12.1$\,{$\pm1.2$} & $10.8$\,{$\pm1.7$} & $21.1$\,{$\pm1.8$} & $85.1$\,{$\pm4.5$} \\
\rowcolor{cgrband}
\quad +\cgr{} & $45.7$\,{$\pm1.2$} & $13.2$\,{$\pm0.5$} & $9.4$\,{$\pm0.2$} & $22.8$\,{$\pm0.3$} & $85.0$\,{$\pm0.5$} \\
OPLoRA & $40.2$\,{$\pm1.5$} & $10.2$\,{$\pm0.6$} & $11.5$\,{$\pm0.7$} & $20.6$\,{$\pm0.4$} & $87.6$\,{$\pm1.4$} \\
\rowcolor{cgrband}
\quad +\cgr{} & $51.1$\,{$\pm1.3$} & $9.4$\,{$\pm0.8$} & $11.4$\,{$\pm0.3$} & $24.0$\,{$\pm0.8$} & $93.8$\,{$\pm1.5$} \\
SAILS & $32.0$\,{$\pm3.3$} & $11.2$\,{$\pm0.8$} & $12.2$\,{$\pm0.3$} & $18.5$\,{$\pm1.2$} & $84.0$\,{$\pm1.8$} \\
\rowcolor{cgrband}
\quad +\cgr{} & $42.9$\,{$\pm3.1$} & $11.6$\,{$\pm1.1$} & $12.1$\,{$\pm0.4$} & $22.2$\,{$\pm0.8$} & $89.9$\,{$\pm1.7$} \\
AdaptFormer & $38.1$\,{$\pm0.3$} & $3.7$\,{$\pm0.2$} & $8.2$\,{$\pm0.1$} & $16.7$\,{$\pm0.1$} & $59.6$\,{$\pm1.1$} \\
\rowcolor{cgrband}
\quad +\cgr{} & $59.4$\,{$\pm0.1$} & $6.2$\,{$\pm0.1$} & $9.9$\,{$\pm0.1$} & $25.1$\,{$\pm0.1$} & $86.7$\,{$\pm0.4$} \\
\bottomrule\end{tabular}
\caption{Seed variability of the Llama-3.1-8B retention results. Every cell
of Table~\ref{tab:llm_instruct} is reported as mean\,$\pm$\, sample std over
seeds. {Per} is capped per seed here, whereas Table~\ref{tab:llm_instruct} applies
the cap to the seed mean, which shifts {Per} by at most $1.0$ where a seed
exceeds the zero-shot base. MTBench is single-seed by design and omitted.}
\label{tab:llm_instruct_std}
\end{table*}

\section{Anchor and confine robustness}\label{app:lambda}

Retention is insensitive to the anchor coefficient $\lambda_{\mathrm{anc}}$ over a wide range. Table~\ref{tab:lambda_cvar} sweeps it $15\times$ on both fine-grained datasets. Accuracy stays within seed variation while retention rises monotonically and plateaus above $\lambda_{\mathrm{anc}}{=}100$, from $69.0$ to $73.8$ on FGVC and from $71.2$ to $74.0$ on CUB-200. FGVC is still rising at the top of the range; CUB-200 has settled, its last two settings differing by less than one standard deviation. The LoRA base without the anchor retains $53.1$ and $59.5$ (Table~\ref{tab:ablation}), so most of the gain comes from the term being active at all and the response beyond that is smooth. The coefficient is a strength dial with a wide safe range rather than a constant tuned per task.

\begin{table}[t]\centering\small\tabcolsep=5pt

\begin{tabular}{l c @{\hskip 1.4em} l c}\toprule
\multicolumn{2}{c}{Natural} & \multicolumn{2}{c}{Specialized / Structured}\\
\cmidrule(r){1-2}\cmidrule(l){3-4}
Task & EV & Task & EV \\\midrule
Caltech-101    & 0.87 & RESISC-45        & 0.74 \\
\rowcolor{tabband}
SUN397         & 0.86 & EuroSAT          & 0.69 \\
DTD            & 0.85 & Diabetic-Retino  & 0.49 \\
\rowcolor{tabband}
CUB-200        & 0.82 & dSprites-loc/ori & 0.23 \\
Oxford-Pets    & 0.81 &                  &      \\
\rowcolor{tabband}
Flowers-102    & 0.76 &                  &      \\
CIFAR-100      & 0.75 &                  &      \\
\rowcolor{tabband}
CIFAR-10       & 0.72 &                  &      \\
SVHN           & 0.67 &                  &      \\
\rowcolor{tabband}
FGVC-Aircraft  & 0.66 &                  &      \\
\bottomrule\end{tabular}
\caption{{SAE reconstruction EV (layer-8 CLS, 1000 samples, frozen
backbone).} Higher $=$ SAE in-distribution, concepts faithful. Natural tasks are
high-EV; the structured/synthetic task (dSprites) is
low-EV.}\label{tab:app_ev}
\end{table}

\begin{table*}[t]\centering\small\tabcolsep=5pt

\begin{tabular}{ll ccccc}\toprule
 & $\lambda_{\mathrm{anc}}$ & 10 & 30 & 60 & 100 & 150 \\\midrule
\multirow{2}{*}{FGVC} & Acc & $59.4$\,{$\pm1.0$} & $58.8$\,{$\pm1.7$} & $60.1$\,{$\pm0.4$} & $59.8$\,{$\pm0.2$} & $59.4$\,{$\pm0.3$} \\
 & C-100 ret. & $69.0$\,{$\pm0.6$} & $70.0$\,{$\pm1.1$} & $72.0$\,{$\pm0.8$} & $73.3$\,{$\pm0.8$} & $73.8$\,{$\pm0.4$} \\\midrule
\multirow{2}{*}{CUB-200} & Acc & $76.0$\,{$\pm0.2$} & $76.3$\,{$\pm0.1$} & $76.4$\,{$\pm0.3$} & $76.3$\,{$\pm0.2$} & $76.2$\,{$\pm0.2$} \\
 & C-100 ret. & $71.2$\,{$\pm0.6$} & $72.1$\,{$\pm0.5$} & $73.2$\,{$\pm0.7$} & $74.2$\,{$\pm0.5$} & $74.0$\,{$\pm0.6$} \\
\bottomrule\end{tabular}
\caption{{Anchor-strength sweep (\cgr{} on LoRA r8).} Retention rises
with $\lambda_{\mathrm{anc}}$ and flattens above $100$ on both datasets, at
accuracy within seed noise. The default $\lambda_{\mathrm{anc}}{=}60$
reproduces the full-\cgr{} row of Table~\ref{tab:ablation}. Mean over $3$
seeds, $\pm$std.}
\label{tab:lambda_cvar}
\end{table*}

\paragraph{Confine strength.}
The same robustness holds for the confine coefficient. Sweeping $\lambda_{\mathrm{conf}}\in{0.03,0.1,0.3,1.0}$ under the Table~\ref{tab:concept_id} protocol (LoRA r8, anchor $\lambda_{\mathrm{anc}}{=}60$, attribution-selected task set, $3$ seeds) leaves both axes on a plateau on both datasets (Table~\ref{tab:lamconf}). Retention varies by $1.8$ points on FGVC-Aircraft and stays within seed variation on CUB-200 over the $33\times$ range, at flat accuracy. The term still engages. The perpendicular fraction of the update, the share of $\Delta W$ outside the task subspace, falls monotonically from $0.26$ to ${\sim}0$ as $\lambda_{\mathrm{conf}}$ grows, so the plateau reflects a wide safe range rather than an inert term. The flanking columns give the endpoints, anchor alone at $\lambda_{\mathrm{conf}}{=}0$ and a hard projection replacing the soft confine, and both retain less than every setting inside the sweep.

\begin{table*}[t]\centering\small\tabcolsep=5pt
\begin{tabular}{ll cccccc}\toprule
 & & \multirow{2}{*}{Anchor only} & \multicolumn{4}{c}{$\lambda_{\mathrm{conf}}$}
   & \multirow{2}{*}{Hard proj.\ ${+}$ anchor} \\
\cmidrule(lr){4-7}
 & & & $0.03$ & $0.1$ & $0.3$ & $1.0$ & \\\midrule
\multirow{2}{*}{FGVC}
  & Acc        & $59.2$ & $60.4$\,{$\pm0.9$} & $59.4$\,{$\pm1.0$} & $60.7$\,{$\pm1.0$} & $60.4$\,{$\pm0.7$} & $58.4$ \\
  & C-100 ret. & $66.2$ & $70.3$\,{$\pm0.6$} & $72.1$\,{$\pm0.3$} & $71.8$\,{$\pm0.5$} & $70.9$\,{$\pm0.9$} & $63.8$ \\\midrule
\multirow{2}{*}{CUB-200}
  & Acc        & $75.6$ & $76.2$\,{$\pm0.1$} & $76.0$\,{$\pm0.4$} & $76.4$\,{$\pm0.2$} & $76.3$\,{$\pm0.4$} & $75.2$ \\
  & C-100 ret. & $70.6$ & $73.0$\,{$\pm1.5$} & $72.9$\,{$\pm1.2$} & $73.2$\,{$\pm1.2$} & $73.1$\,{$\pm1.6$} & $69.4$ \\
\bottomrule\end{tabular}
\caption{{Confine-strength sweep (\cgr{} on LoRA r8).} Accuracy and CIFAR-100 zero-shot retention stay on a plateau over a $33\times$ range of $\lambda_{\mathrm{conf}}$, and the default $\lambda_{\mathrm{conf}}{=}0.1$ sits inside it. The flanking columns give the endpoints. Anchor alone is the $\lambda_{\mathrm{conf}}{=}0$ limit, and replacing the soft confine with a hard projection onto the task subspace retains less still. Every setting inside the sweep beats both.}
\label{tab:lamconf}
\end{table*}

\paragraph{Retention-set size.}
The remaining anchor hyperparameter is the retention-set size $|\mathcal{S}{\mathrm{ret}}|$. Sweeping $k\in{32,64,128,256,512}$ under the same protocol leaves both axes on a plateau on both datasets (Table~\ref{tab:ksweep}). Retention varies by $2.3$ points on FGVC and $1.8$ on CUB-200 over the $16\times$ range, declining gently toward the all-active limit of Table~\ref{tab:concept_id} ($66.6$ and $68.6$). Adding ever-lower-presence concepts dilutes the anchor uniformly rather than protecting more, so dilution is the mechanism and not anchor breadth itself. An importance-weighted all-active variant, each concept weighted by its mean activation magnitude with weights normalized to mean $1$, recovers the selected set's retention at matched $\lambda{\mathrm{anc}}$ (FGVC $73.0{\pm}0.7$, CUB-200 $74.1{\pm}0.6$ against top-$k$'s $72.0{\pm}0.8$ and $73.2{\pm}0.7$, accuracy tied) by re-concentrating the anchoring mass on high-firing concepts. Selection reaches the same concentration with $128$ named concepts instead of ${\sim}4$k weighted ones, which is what keeps the anchored set intervenable (App.~\ref{app:ablation}).

\begin{table*}[t]\centering\small\tabcolsep=5pt

\begin{tabular}{ll ccccc}\toprule
 & $|\mathcal{S}_{\mathrm{ret}}|$ & 32 & 64 & 128 & 256 & 512 \\\midrule
\multirow{2}{*}{FGVC} & Acc        & $59.4$\,{$\pm0.4$} & $60.0$\,{$\pm0.5$} & $60.1$\,{$\pm0.4$} & $59.7$\,{$\pm0.6$} & $59.8$\,{$\pm0.8$} \\
 & C-100 ret. & $72.8$\,{$\pm0.4$} & $73.2$\,{$\pm0.4$} & $72.0$\,{$\pm0.8$} & $71.8$\,{$\pm0.5$} & $70.9$\,{$\pm0.8$} \\\midrule
\multirow{2}{*}{CUB-200} & Acc     & $76.2$\,{$\pm0.6$} & $76.3$\,{$\pm0.2$} & $76.4$\,{$\pm0.3$} & $76.3$\,{$\pm0.1$} & $76.0$\,{$\pm0.1$} \\
 & C-100 ret. & $74.0$\,{$\pm0.4$} & $73.3$\,{$\pm0.8$} & $73.2$\,{$\pm0.7$} & $72.7$\,{$\pm0.7$} & $72.2$\,{$\pm0.7$} \\
\bottomrule\end{tabular}
\caption{{Retention-set size sweep (\cgr{} on LoRA r8).} Accuracy
and CIFAR-100 zero-shot retention are flat over a
$16\times$ range of $|\mathcal{S}_{\mathrm{ret}}|$ on both FGVC-Aircraft and
CUB-200. The default $k{=}128$ sits inside the plateau.}
\label{tab:ksweep}
\end{table*}

\paragraph{Task-set size.}
The confine channel has the matching hyperparameter, the task-set size $|\mathcal{S}{\mathrm{tsk}}|$. It enters both terms, setting the dimension of the subspace the update may occupy and defining the set excluded from $\mathcal{S}{\mathrm{ret}}$ in Eq.\eqref{eq:ret}. Sweeping $|\mathcal{S}{\mathrm{tsk}}|\in{32,64,128,256,512}$ under the Table~\ref{tab:concept_id} protocol (LoRA r8, $\lambda{\mathrm{anc}}{=}60$, $\lambda_{\mathrm{conf}}{=}0.1$, $3$ seeds) leaves accuracy flat over the $16\times$ range on both datasets (Table\ref{tab:stasksweep}). A rank-$8$ update occupies few directions, so even the smallest subspace accommodates it. Retention instead declines monotonically as $|\mathcal{S}{\mathrm{tsk}}|$ grows, by $8.1$ points on FGVC and $5.0$ on CUB-200, through two pathways that move together. A larger task set widens the region the update may occupy, so the perpendicular fraction falls toward zero and the confine term stops binding. It also enlarges the set excluded from $\mathcal{S}{\mathrm{ret}}$, pushing the anchor's candidate pool toward lower-presence concepts. Boundedness therefore matters on the task side exactly as it does on the retention side (Table~\ref{tab:ksweep}). The default $|\mathcal{S}_{\mathrm{tsk}}|{=}128$ is the symmetric untuned choice and sits in the stable middle of the range on both datasets.

\begin{table*}[t]\centering\small\tabcolsep=5pt

\begin{tabular}{ll ccccc}\toprule
 & $|\mathcal{S}_{\mathrm{tsk}}|$ & 32 & 64 & 128 & 256 & 512 \\\midrule
\multirow{3}{*}{FGVC} & Acc & $60.3$\,{$\pm0.6$} & $59.9$\,{$\pm1.3$} & $60.1$\,{$\pm0.4$} & $60.0$\,{$\pm0.7$} & $59.9$\,{$\pm0.5$} \\
 & C-100 ret. & $75.3$\,{$\pm1.3$} & $73.2$\,{$\pm1.0$} & $72.0$\,{$\pm0.8$} & $70.8$\,{$\pm0.4$} & $67.2$\,{$\pm1.1$} \\
\midrule
\multirow{3}{*}{CUB-200} & Acc & $77.0$\,{$\pm0.1$} & $77.0$\,{$\pm0.3$} & $76.4$\,{$\pm0.3$} & $76.1$\,{$\pm0.3$} & $76.0$\,{$\pm0.2$} \\
 & C-100 ret. & $75.7$\,{$\pm1.6$} & $75.0$\,{$\pm0.6$} & $73.2$\,{$\pm0.7$} & $72.0$\,{$\pm1.1$} & $70.7$\,{$\pm0.8$} \\
\bottomrule\end{tabular}
\caption{{Task-set size sweep (\cgr{} on LoRA r8).} Accuracy is
flat over a $16\times$ range of $|\mathcal{S}_{\mathrm{tsk}}|$ on both datasets. Bounding
the task set improves retention at no accuracy cost. The default
$|\mathcal{S}_{\mathrm{tsk}}|{=}128$ matches $|\mathcal{S}_{\mathrm{ret}}|$ and is not
tuned.}
\label{tab:stasksweep}
\end{table*}

\section{SAE faithfulness across VTAB}\label{app:ev}

Table~\ref{tab:app_ev} reports the in-pipeline layer-8 reconstruction EV of the
Prisma SAE on 1000 samples per task used in this paper (frozen backbone, no
training). EV spans $0.87$ (Caltech-101) down to $0.23$ (dSprites).
Natural-image tasks are high-EV (the natural-image SAE is in-distribution, so
its concepts are faithful) while structured/synthetic tasks are low-EV
(out-of-distribution), where the concept channel is expected to weaken. All the mechanism studies above
sit in the high-EV natural regime; the low-EV tasks map the boundary of the
SAE-concept approach. (Notably the $+\cgr$ retention \emph{gain}
tracks forgetting pressure rather than EV directly,
and EV is anti-correlated with that pressure across the natural tasks, so we do
not read EV as a direct cause of the gain.)

\section{Notation}\label{app:notation}

In Table~\ref{tab:notation} we list all notations used in this work.
\begin{table}[t]\centering\small
\begin{tabular}{@{}l p{0.70\columnwidth}@{}}\toprule
Symbol & Meaning \\\midrule
\multicolumn{2}{@{}l}{\emph{Model and adaptation}}\\
$W_0$ & Frozen pretrained weight at the adapted site, $W_0\in\mathbb{R}^{d\times d_{in}}$ \\
$\Delta W$ & Additive weight update; $\Delta W{=}BA$ for LoRA \\
$B,A$ & LoRA factors, $B\in\mathbb{R}^{d\times r}$, $A\in\mathbb{R}^{r\times d_{in}}$ \\
$r$ & LoRA rank \\
$\ell$ & Fixed read-out layer, never tuned \\
$x$ & Input to the adapted weight, $x\in\mathbb{R}^{d_{in}}$ \\
$h_0(x),h_{\mathrm{ft}}(x)$ & Residual-stream vector at the read-out point before and after fine-tuning, $h_{\mathrm{ft}}{=}h_0{+}\Delta W x$ \\
$\Sigma$ & Input second moment $\mathbb{E}[xx^{\top}]$, estimated once on the frozen model \\
$\mathcal{D}_{\mathrm{task}},\mathcal{D}$ & Task input distribution; any evaluation distribution \\
$T$ & Token positions per input (patches for vision) \\\midrule
\multicolumn{2}{@{}l}{\emph{Frozen sparse autoencoder}}\\
$m$ & Dictionary size, $m\gg d$ \\
$W_{enc},b_{enc}$ & Encoder, $W_{enc}\in\mathbb{R}^{m\times d}$; concept $j$ reads through row $W_{enc}[j,:]$ \\
$W_{dec}$ & Decoder, $W_{dec}\in\mathbb{R}^{d\times m}$; concept $j$ writes along column $W_{dec}[:,j]$ \\
$g$ & The SAE's native gate (ReLU, TopK, JumpReLU) \\
$z(h),z_j$ & SAE score vector and its $j$-th entry \\
$f_j(h)$ & \cgr{}'s concept read-out, $\mathrm{ReLU}(W_{enc}[j,:]h+b_{enc}[j])$ \\
$f_{\mathcal{S}}(h)$ & Read-out stacked over a concept set $\mathcal{S}$ \\\midrule
\multicolumn{2}{@{}l}{\emph{\cgr{}}}\\
$\mathcal{S}_{\mathrm{tsk}},\mathcal{S}_{\mathrm{ret}}$ & Task and retention concept sets, disjoint by construction \\
$a_j$ & Causal attribution score selecting $\mathcal{S}_{\mathrm{tsk}}$ \\
$\phi_j$ & Firing presence selecting $\mathcal{S}_{\mathrm{ret}}$ \\
$U_{\mathrm{tsk}}$ & Orthonormal basis of the task subspace, $U_{\mathrm{tsk}}\in\mathbb{R}^{d\times|\mathcal{S}_{\mathrm{tsk}}|}$ \\
$P_{\mathrm{tsk}},P^{\perp}_{\mathrm{tsk}}$ & Projector onto the task subspace and its complement \\
$\Lconf,\Lanchor$ & \textsc{confine} and \textsc{anchor} penalties \\
$\lambda_{\mathrm{conf}},\lambda_{\mathrm{anc}}$ & Their coefficients in Eq.~\eqref{eq:total} \\
{$\mathcal{L}_{\mathrm{task}},\mathcal{L}$} & {Fine-tuning loss and the total objective of Eq.~\eqref{eq:total}} \\
{$k$} & {Retention-set size, $k=|\mathcal{S}_{\mathrm{ret}}|$ in Eq.~\eqref{eq:ret}} \\\midrule
\multicolumn{2}{@{}l}{\emph{Theory}}\\
$\Delta f_{\mathrm{ret}}(x)$ & Retention read-out drift, $f_{\mathcal{S}_{\mathrm{ret}}}(h_{\mathrm{ft}}){-}f_{\mathcal{S}_{\mathrm{ret}}}(h_0)$ \\
$J_{\mathrm{ret}}$ & Jacobian of the retention read-out, $\partial f_{\mathcal{S}_{\mathrm{ret}}}/\partial h$ \\
$R,\Delta R$ & Retention functional and its change (a frozen probe's margin or loss, not the reported accuracy) \\
$L_R$ & Lipschitz constant of $R$ \\
$W_{ret},b_{ret}$ & Retention read-out weights, $W_{enc}[\mathcal{S}_{\mathrm{ret}},:]$ and $b_{enc}[\mathcal{S}_{\mathrm{ret}}]$ \\
{$\mathcal{L}_{\mathrm{low}},\mathcal{L}_{\mathrm{up}}$} & {Weight-space drift bounds, Eq.~\eqref{eq:drift-bounds}} \\
{$\mathcal{J},W_{\mathcal{J}}$} & {Always-active subset $\mathcal{J}\subseteq\mathcal{S}_{\mathrm{ret}}$ in Prop.~\ref{prop:drift-bounds} and its read-out rows $W_{\mathrm{ret}}[\mathcal{J},:]$} \\
{$\Sigma_A$} & {Second moment restricted to the input region $A$, $\mathbb{E}[\mathbf{1}_A(x)\,xx^{\top}]$} \\
\bottomrule\end{tabular}
\caption{Notation used throughout the paper.}\label{tab:notation}
\end{table}

\end{document}